\newtheorem{thm}{Theorem}
\newtheorem*{thm*}{Theorem}
\newtheorem{lem}[thm]{Lemma}
\newtheorem*{lem*}{Lemma}
\newtheorem{deff}{Definition}
\newtheorem*{deff*}{Definition}
\newtheorem{rmk}{Remark}
\newtheorem{prop}[thm]{Proposition}
\newtheorem*{prop*}{Proposition}
\newtheorem{cor}[thm]{Corollary}
\newtheorem*{cor*}{Corollary}
\newtheorem{prob*}{Problem}
\newcommand{\R}{\mathbb{R}}
\newcommand{\N}{\mathbb{N}}
\newcommand{\Norm}[1]{\left\Vert #1 \right\Vert}
\newcommand{\abs}[1]{\lvert#1 \rvert} 
\newcommand{\inn}[1]{\langle #1\rangle}
\newcommand\restr[2]{{% we make the whole thing an ordinary symbol
  \left.\kern-\nulldelimiterspace % automatically resize the bar with \right
  #1 % the function
  \vphantom{\big|} % pretend it's a little taller at normal size
  \right|_{#2} % this is the delimiter
  }}
\DeclareMathOperator*{\sgn}{sign}
\DeclareMathOperator*{\spn}{span}
\DeclareMathOperator*{\id}{Id}
\DeclareMathOperator*{\hing}{hinge}
\title{Representer Theorems for Metric and Preference Learning: Geometric Insights and Algorithms\thanks{A shorter version appeared in the \emph{Proc.\ Int.\ Conf.\ on Artificial Intelligence and Statistics (AISTATS 2025)} \cite{morteza2025representer}}}
\author{%
  Peyman Morteza\\
  Department of Computer Sciences\\
  University of Wisconsin-Madison\\
  \texttt{peyman@cs.wisc.edu} \\
  }
\date{}
\begin{document}
\bibliographystyle{amsalpha} 
%ADDED
%\vspace*{-3em}
\maketitle
%ADDED
%\vspace*{-2.5em}
\begin{abstract}
We develop a mathematical framework to address a broad class of metric and preference learning problems within a Hilbert space. We obtain a novel representer theorem for the simultaneous task of metric and preference learning. Our key observation is that the representer theorem for this task can be derived by regularizing the problem with respect to the norm inherent in the task structure. For the general task of metric learning, our framework leads to a simple and self-contained representer theorem and offers new geometric insights into the derivation of representer theorems for this task. In the case of Reproducing Kernel Hilbert Spaces (RKHSs), we illustrate how our representer theorem can be used to express the solution of the learning problems in terms of finite kernel terms similar to classical representer theorems. Lastly, our representer theorem leads to a novel nonlinear algorithm for metric and preference learning. We compare our algorithm against challenging baseline methods on real-world rank inference benchmarks, where it achieves competitive performance. Notably, our approach significantly outperforms vanilla ideal point methods and surpasses strong baselines across multiple datasets. Code available at: \url{https://github.com/PeymanMorteza/Metric-Preference-Learning-RKHS}

\end{abstract}
%ADDED
\setcounter{tocdepth}{1}
\setlength{\cftbeforesecskip}{0.8em}
{
\begingroup
\setstretch{0.1}   % or 0.85–0.95 depending on overflow
\tableofcontents
\endgroup
}
\vspace{-1mm}
%%%%%%%%%%%%%%%%%%%%%%
\section{Introduction}
\label{sec:overview}
%%%%%%%%%%%%%%%%%%%%%%
In machine learning, when given a set of objects or samples with only partial information, a key challenge is to infer their relationships and establish meaningful comparisons across the set. Many real-world AI applications require this capability to make informed decisions despite incomplete or limited observations. For example, large language models (LLMs) rely on ranking or scoring mechanisms to better align their generated responses with human feedback \cite{ouyang2022training,rafailov2023direct}. In computer vision, ranking is crucial in retrieval and similarity-based tasks \cite{cakir2019deep}. Similarly, recommender systems must prioritize items according to user preferences \cite{hsieh2017collaborative}.

A common approach to tackling ranking and preference learning problems involves learning a metric to quantify distances between embedded samples or identifying a reference point for proximity-based ranking \cite{furnkranz2010preference,kulis2013metric,bellet2013survey,jamieson2011active}. These models are widely applied across domains and extensively studied. However, their standard versions often struggle to capture nonlinear aspects of the problem. While theoretically and empirically well-explored, the systematic study of their kernelized counterparts has many open directions, despite the powerful framework kernel methods provide for modeling complex, nonlinear relationships.

In this paper, we address this gap by introducing a novel mathematical framework designed to investigate a broad spectrum of metric and preference learning problems within Hilbert spaces. Our focus revolves around two common tasks in this domain: the simultaneous task of metric and preference learning from pairwise comparisons \cite{jamieson2011active,massimino2021you,xu2020simultaneous,canal2022one}, and the task of metric learning from triplet comparisons \cite{ye2019fast,jain2016finite,mason2017learning}. In the simultaneous task, given a set of embedded samples $S:=\{x_{1},...,x_{m}\}\subset \mathbb{R}^{d}$ the learner is searching for an {\it ideal point} $u\in\R^{d}$ and a metric that aligns with the given partial binary responses of the form "$u$ prefers $x_{i}$ over $x_{j}$". We show that our framework leads to the first representer theorem for this task. The primary technical challenge in obtaining a representer theorem is that when the problem is lifted to a Hilbert space (potentially infinite dimensional), the ideal point, $u$, is unknown to the learner, and it may not lie on the subspace spanned by embedded samples. We define the space of {\it generalized Mahalanobis inner products} on a Hilbert space and demonstrate that the representer theorem can be naturally derived when formulated with respect to the norm induced by the inner product coming from this space. In the task of metric learning from triplet comparison, the learner aims to learn a metric that aligns with binary responses of the form "$x_{k}$ is closer to $x_{i}$ than $x_{j}$". Here, our framework yields a simple and self-contained representer theorem, offering fresh geometric insights into the derivation of such theorems. As an application, in the case of Reproducing Kernel Hilbert Spaces (RKHSs), we illustrate how our representer theorems can be used to express the solution of the original infinite-dimensional problem by solving a finite-dimensional counterpart. This leads to the development of new nonlinear algorithms specifically designed for these problems. We apply our algorithm to rank inference benchmarks and demonstrate that it is highly competitive, outperforming many strong baseline methods. The contribution of our work can be summarized as follows: 
\begin{itemize}
    \item We develop a novel mathematical framework to study a broad spectrum of metric and preference learning problems within Hilbert space. We define the notion of generalized Mahalanobis inner products on a Hilbert space, and precisely characterize their restriction to finite-dimensional subspaces as demonstrated in Theorem \ref{thm:mahal-char-rest}. 
    \item We show that our framework yields the first and novel representer theorem, Theorem \ref{thm:reper}, for the simultaneous task of metric and preference learning and also leads to a simple and self-contained representer theorem, Theorem \ref{thm:triplet_repr}, for the task of metric learning. 
    \item In the case of RKHSs, we show that our representer theorems can be used to express solutions to learning problems using a finite number of kernel terms (Proposition \ref{prop:finite-euc} and Proposition \ref{prop:finite-euc-Gram}). Furthermore, we demonstrate that this formulation leads to the development of new algorithms (Algorithm \ref{alg:ker_ideal_train} and Algorithm \ref{alg:ker_ideal_test}) for nonlinear metric and preference learning. 
    \item We present empirical evaluations of our algorithm on both synthetic and real datasets, demonstrating its competitive performance.
\end{itemize}
We close the introduction by providing an outline of this work. In Section \ref{sec:problem-euc}, we revisit the simultaneous task of metric and preference learning from pairwise comparisons and the task of metric learning from triplet comparison in finite-dimensional Euclidean spaces. In Section \ref{sec:generalized-mahal}, we define the space of generalized Mahalanobis inner products and precisely characterize how they behave when restricted to finite-dimensional subspaces. In Section \ref{sec:repr-thms}, we utilize our framework and derive our representer theorems for the simultaneous task, and for the triplet task. In Section \ref{sec:rkhs}, we investigate the case of RKHS and demonstrate how our representer theorems can be used to convert infinite-dimensional learning problems to finite-dimensional counterparts in Euclidean spaces and present a new kernelized algorithm derived from our representer theorem for metric and preference learning. In Section \ref{sec:exper}, we evaluate this algorithm on both synthetic and real-world benchmarks to analyze the effect of regularization and compare its performance with other methods.
 Finally, we conclude our work in Section \ref{sec:con} after reviewing related works in Section \ref{sec:related}. The Appendix contains detailed proofs of the main results as well as supplementary background material.
%%%%%%%%%%%%%%%%%%%%%%%%%%%
\section{Metric and Preference Learning Problem in Euclidean Spaces}
\label{sec:problem-euc}
%%%%%%%%%%%%%%%%%%%%%%%%%%%%
In this section, we revisit the formulation of metric and preference learning in Euclidean spaces. Throughout this section, we work with input space $\R^{d}$ and $\mathbb{S}^{d}_{+}$ the space of $d\times d$ positive definite matrices and $\sgn(\cdot)$ denotes the sign function. Next, recall the definition of Mahalanobis distance,
\begin{deff}[\cite{mahalanobis1936generalized}]
    Given $M\in \mathbb{S}^{d}_{+}$, the Mahalanobis distance correspondes to $M$, denoted by $d_{M}$, is defined by,
    \begin{align*}
        d^{2}_{M}(x,y)=(x-y)^{\top}M(x-y),
    \end{align*}
    for $x,y\in \R^{d}$. 
\end{deff}
\begin{rmk}
    It's a standard fact that the distance $d_{M}(\cdot,\cdot)$ is induced by a norm, which in turn is induced by an inner product. The Mahalanobis distance is frequently employed in existing algorithms for metric and preference learning \cite{kulis2013metric,bellet2013survey}.
\end{rmk}
Next, we revisit the formulation of two primary tasks in finite-dimensional input space, which we intend to explore in general Hilbert spaces in subsequent sections.
\paragraph{Simultaneous Metric and Preference Learning from Pairwise Comparisons}
In the task of simultaneous metric and preference learning from paired comparisons\cite{jamieson2011active,xu2020simultaneous,canal2022one}, beginning with a set of embedded samples, $S:=\{x_{1},...,x_{m}\}\subset \mathbb{R}^{d}$, the learner aims to acquire a background preference metric $d_{M}$ associated with $M \in \mathbb{S}^{d}_{+}$ and an ideal point $u \in \mathbb{R}^{d}$, given the following data,$(z^{i},y^{i}), \text{ for } 1\le i\le n$, where $n\le \binom{m}{2}$ and $z^{i}=(z^{i}_{1},z^{i}_{2})\in S \times S$ is a pair of samples from $S$ and $y^{i}\in \{-1,+1\}$ represents if $u$ prefers $z^{i}_{1}$ over $z^{i}_{2}$ or not in the following sense, $y^{i}=\sgn(d_{M}(z_{1}^{i},u)-d_{M}(z^{i}_{2},u))$. Given the provided data, our objective is to learn the ideal point $u \in \mathbb{R}^{d}$ and the preference metric $d_{M}$ associated with $M \in \mathbb{S}^{d}_{+}$. Next, the learning problem can be formalized by considering the associated Empirical Risk Minimization (ERM) as follows, 
\begin{align}
\label{eq:simul-finite}
	\min_{A\in  \mathbb{S}^{d}_{+},u\in \R^{d}}\frac{1}{n}\sum_{i=1}^{n}\ell(d^{2}_{A}(z^{i}_{1},u)-d^{2}_{A}(z^{i}_{2},u),y^{i}),\tag{PF}
\end{align}
where $\ell$ is a general loss function. Note that, in the above formulation, we considered a general form of loss that for $1\le i\le n$, depends on $d_{A}^{2}(z^{i}_{1},u)-d^{2}_{A}(z^{i}_{2},u)$, the preference measure of $u$ between $z^{i}_{1}$ and $z^{i}_{2}$, and response value $y^{i}$ which indicates the preference sign based on the ground truth metric $d^{2}_{M}$. Our framework works for general loss as explained above. We next elaborate on two special cases of interest for theoretical analysis and practical applications. 
\begin{itemize}
    \item $\ell(d^{2}_{A}(z^{i}_{1},u)-d^{2}_{A}(z^{i}_{2},u),y^{i}):=\ell_{0/1}(\sgn(d^{2}_{A}(z^{i}_{1},u)-d^{2}_{A}(z^{i}_{2},u)),y^{i})$,
    \item $\ell(d^{2}_{A}(z^{i}_{1},u)-d^{2}_{A}(z^{i}_{2},u),y^{i}):=\ell_{conv}(\big{(}d^{2}_{A}(z^{i}_{1},u)-d^{2}_{A}(z^{i}_{2},u)\big{)}\cdot y^{i})$.
\end{itemize}
In the first item, $\ell_{0/1}(\cdot)$ denotes the $0/1$ loss, which is beneficial for theoretical analysis, such as examining sample complexity. In the second item, $\ell_{conv}$ represents any convex Lipschitz loss (e.g., Hinge loss), making it practical for applications as one can study its convex relaxation, as demonstrated in works such as \cite{xu2020simultaneous,canal2022one}. 
\begin{rmk}
While Problem \ref{eq:simul-finite} has been extensively studied, there is currently no kernelization framework for the simultaneous task. The main challenge in developing such a framework is the existence of the ideal point $u$. In fact, when the problem is lifted to a Hilbert space (potentially infinite-dimensional), there is no guarantee that the ideal point $u$ lies on the subspace spanned by embedded samples, as is the case in the classical setting for other learning problems. We show how to address this by equipping the ambient space with a natural inner product. 
\end{rmk}
\paragraph{Learning Metrics from Triplet Comparisons}
In the task of metric learning from triplet comparisons\cite{ye2019fast,jain2016finite,mason2017learning}, beginning with a set of embedded samples, $S:=\{x_{1},...,x_{m}\}\subset \mathbb{R}^{d}$, the learner aims to acquire a background preference metric $d_{M}$ associated with $M \in \mathbb{S}^{d}_{+}$, given the following data,
$(z^{i},y^{i}), \text{ for } 1\le i\le n$, where $n\le \binom{m}{3}$ and $z^{i}=(z^{i}_{1},z^{i}_{2},z^{i}_{3})\in S\times S\times S$ is a triplet sample and $y^{i}\in \{+1,-1\}$ represents if $z^{i}_{1}$ is similar to $z^{i}_{2}$ or $z^{i}_{3}$ in the following sense,$y^{i}=\sgn(d_{M}(z^{i}_{1},z^{i}_{2})-d_{M}(z^{i}_{1},z^{i}_{3}))$. Given the provided data, our objective is to learn the metric $d_{M}$ associated with $M \in \mathbb{S}^{d}_{+}$. Next, the learning problem can be formalized by considering the associated Empirical Risk Minimization (ERM) as follows,
\begin{align}
\label{eq:triplet-finite}
	\min_{A\in \mathbb{S}^{d}_{+}}\frac{1}{n}\sum_{i=1}^{n}\ell(d^{2}_{A}(z^{i}_{1},z^{i}_{2})-d^{2}_{A}(z^{i}_{1},z^{i}_{3}),y^{i}).\tag{TF}
\end{align}
where $\ell$ is a general loss function. Similar to the simultaneous task, \ref{eq:simul-finite}, we can consider the following two special cases of interest for the loss. 
\begin{itemize}

    \item  $ 
    \ell(d^{2}_{A}(z^{i}_{1},z^{i}_{2})-d^{2}_{A}(z^{i}_{2},z^{i}_{3}),y^{i}):=\ell_{0/1}(\sgn(d^{2}_{A}(z^{i}_{1},z^{i}_{2})-d^{2}_{A}(z^{i}_{2},z^{i}_{3})),y^{i})$,
    \item  $\ell(d^{2}_{A}(z^{i}_{1},z^{i}_{2})-d^{2}_{A}(z^{i}_{2},z^{i}_{3}),y^{i}):=\ell_{conv}(\big{(}d^{2}_{A}(z^{i}_{1},z^{i}_{2})-d^{2}_{A}(z^{i}_{2},z^{i}_{3})\big{)}\cdot y^{i})$.
\end{itemize}
In the first item, $\ell_{0/1}(\cdot)$ denotes the $0/1$ loss, which is beneficial for theoretical analysis, such as examining sample complexity. In the second item, $\ell_{conv}$ represents any convex Lipschitz loss (e.g., Hinge loss). 
\begin{rmk}
Note that in the triplet setting, the ideal point is absent. Prior works have explored kernelization frameworks for Mahalanobis metric learning \cite{chatpatanasiri2010new,kulis2013metric,jain2012metric}. The framework developed in this paper yields a straightforward and self-contained representer theorem for the triplet setting. Importantly, it does not rely on the Kernel PCA trick and offer new geometric insights.
\end{rmk}
%%%%%%%%%%%%%%%%%%%%%%%%%%%
\section{Space of Generalized Mahalanobis Inner Products}
\label{sec:generalized-mahal}
%%%%%%%%%%%%%%%%%%%%%%%%%%%%
In this section, we develop a general framework that can be utilized to investigate a wide range of metric and preference learning problems, including \ref{eq:simul-finite} and \ref{eq:triplet-finite}, within a general Hilbert space. We establish a couple of technical results that we later leverage to derive our representer theorems. Specifically, we introduce the space of generalized Mahalanobis inner products (see Definition \ref{deff:gen-mahal}) and precisely characterize their restriction to finite-dimensional subspaces, as stated in Theorem \ref{thm:mahal-char-rest}. Throughout this section, we assume $\mathcal{H}$ is a Hilbert space equipped with inner product $\inn{\cdot,\cdot}_{\mathcal{H}}$ and associated norm $\Norm{\cdot}_{\mathcal{H}}$ (We review some standard facts and definitions regarding linear operators on Hilbert spaces in the Appendix). To begin, we require a class of Mahalanobis distances in a general Hilbert space. We define the following,

\begin{deff}[Space of generalized Mahalanobis inner products]
\label{deff:gen-mahal}
Space of generalized Mahalanobis inner product on a Hilbert space $\mathcal{H}$ is defined to be the following set,
\begin{align*}
	\mathcal{F}_{\mathcal{H}} := 
	\left\{ A: \mathcal{H} \to \mathcal{H} \mid 
	\begin{aligned}
		& A \text{ is bounded,}
		& \text{strictly positive, self-adjoint}
	\end{aligned}
	\right\}.
\end{align*}
\end{deff}
%%%%%%%%%%%%%%%%%
\begin{rmk}
We can view $\mathcal{F}_{\mathcal{H}}$ as the set that parametrizes the {\it generalized Mahalanobis inner product}, as we explain next. For $A\in \mathcal{F}_{\mathcal{H}}$, we can consider the associated inner product defined by, $\inn{x,y}_{A}:=\inn{Ax,y}_{\mathcal{H}}$. It can be seen that for $A\in \mathcal{F}_{\mathcal{H}}$, $\inn{\cdot,\cdot}_{A}$ defines an inner product on $\mathcal{H}$ (See Proposition \ref{prop:inner_equiv} below). Moreover, when $\mathcal{H}$ is finite-dimensional and an orthonormal basis is chosen with respect to $\inn{\cdot,\cdot}_{\mathcal{H}}$, $A$ can be represented by a positive definite matrix $M$ and $\Norm{\cdot}_{A}$ coincides with the Mahalanobis norm corresponding to $M$ (See Lemma \ref{lem:matrix-repr} below). In this sense, elements in $\mathcal{F}_{\mathcal{H}}$ can be viewed as an infinite-dimensional analogue of the standard Mahalanobis norms \cite{mahalanobis1936generalized} in Euclidean spaces. 
   In other words, we can regard elements of $\mathcal{F}_{\mathcal{H}}$ as a generalized version of symmetric, positive-definite matrices in finite-dimensional space. 
\end{rmk}
\begin{rmk}
Previous studies \cite{chatpatanasiri2010new, kulis2013metric, jain2012metric} have explored the learning of a general operator $L$ without additional assumptions and have focused on the distance of the image (e.g., $d_{L}(x,y):=\Norm{Lx-Ly}_{\mathcal{H}}$) as the metric corresponding to $L$. This approach, while capable of kernelizing \eqref{eq:triplet-finite}, fails to work for the kernelization of \eqref{eq:simul-finite}. Utilizing $\mathcal{F}_{\mathcal{H}}$ offers an important advantage, as its elements are uniquely associated with inner products on $\mathcal{H}$. Leveraging these inner products enables us to develop a novel regularizer that plays a crucial role in formulating our representer theorems, as we will elaborate in Section \ref{sec:repr-thms}. 
\end{rmk}
The next proposition shows that associated with any element $\mathcal{F}_{\mathcal{H}}$ comes with a natural inner product. 
\begin{prop}
\label{prop:inner_equiv}
Let $A\in \mathcal{F}_{\mathcal{H}}$, then,
\begin{align*}
	\inn{x,y}_{A}:=\inn{Ax,y}_{\mathcal{H}},
\end{align*}
defines an inner product on $\mathcal{H}$ that is equivalent to $\inn{\cdot,\cdot}_{\mathcal{H}}$. Conversely, let $g(\cdot,\cdot)$ be an inner product on $\mathcal{H}$ equivalent to $\inn{\cdot,\cdot}_{\mathcal{H}}$ then there exist a unique $A\in \mathcal{F}_{\mathcal{H}}$ such that $g=\inn{\cdot,\cdot}_{A}$.
\end{prop}
\begin{proof}
Proof can be found in the appendix.
\end{proof}
\begin{rmk}
The proposition \ref{prop:inner_equiv} establishes a one-to-one correspondence between elements of $\mathcal{F}_{\mathcal{H}}$ and their associated inner products. Consequently, rather than dealing directly with operators, one can operate with these inner products. This eliminates the necessity of referencing a specific basis for deriving represented theorems. Such an approach is particularly useful in general Hilbert spaces (e.g., RKHSs), where a canonical basis is not provided, in contrast to $\mathbb{R}^{n}$. 
\end{rmk}
The subsequent lemma demonstrates that when $\mathcal{H}$ is finite-dimensional with a predetermined orthonormal basis, the elements of $\mathcal{F}_{\mathcal{H}}$ align with the standard Mahalanobis norm, corresponding to the representation of the operator with respect to the basis. 
\begin{lem}
\label{lem:matrix-repr}
    Let $\mathcal{H}$ be a $n$-dimensional Hilbert space and $\{e_{1},...,e_{n}\}$ be an orthonormal basis for $\mathcal{H}$. Let $M$ denote the representation of $A$ with respect to this basis. Then the inner product associated with the Mahalanobis norm coming from $M$ coincides with $\inn{\cdot,\cdot}_{A}$. 
\end{lem}
\begin{proof}
Proof can be found in the appendix.
\end{proof}
\begin{rmk}
The  lemma implies that for any orthonormal basis, the Mahalanobis inner product of the corresponding matrix aligns with $\inn{\cdot,\cdot}_{A}$. The lemma can be used to perform computations independent of any basis, thereby simplifying the computational process. 
\end{rmk}
Next, considering a finite-dimensional subspace $V \subset \mathcal{H}$, $V$ can be viewed as a Hilbert space inheriting its inner product from $\mathcal{H}$. We can define $\mathcal{F}_{V}$ in the same manner as outlined in Definition \ref{deff:gen-mahal}. The following definition delineates which elements in $\mathcal{F}_{V}$ can be paired with elements in $\mathcal{F}_{\mathcal{H}}$.
\begin{deff}
\label{def:extension}
Let $\mathcal{H}$ be a Hilbert space and $V\subset \mathcal{H}$ be a finite-dimensional subspace. Let $A\in \mathcal{F}_{\mathcal{H}}$ and $B \in \mathcal{F}_{V}$. We say $A$ is {\it Mahalanobis extension} of $B$ (or $B$ is a {\it Mahalanobis restrict} of $A$) if,
\begin{align*}
   \restr{\inn{\cdot,\cdot}_{A}}{V}=\inn{\cdot,\cdot}_{B}.
\end{align*}
\end{deff}
We close this section by stating the following theorem which completely characterizes when $A\in \mathcal{F}_{\mathcal{H}}$ on ambient space is the Mahalanobis extension of $B\in \mathcal{F}_{V}$. This will be used in the next section to obtain our representer theorems.
\begin{thm}
\label{thm:mahal-char-rest}
$A\in \mathcal{F}_{\mathcal{H}}$ is Mahalanobis extension of $B\in \mathcal{F}_{V}$ iff,
\begin{align*}
    B=PA|_{V},
\end{align*}
where,
\begin{align*}
    P:\mathcal{H}\to V\subset \mathcal{H},
\end{align*}
is projection operator with respect to $\inn{\cdot,\cdot}_{\mathcal{H}}$. Moreover, each $B\in \mathcal{F}_{V}$ has at least one Mahalanobis extension.
\end{thm}
\begin{proof}
Proof can be found in the appendix.
\end{proof}
%%%%%%%%%%%%%%%%%%%%%%%%%%%
\section{Representer Theorems for Metric and Preference Learning}
\label{sec:repr-thms}
%%%%%%%%%%%%%%%%%%%%%%%%%%%%
In this section, we leverage our framework to delve into metric and preference learning problems in Hilbert space. Specifically, we establish a new representer theorem for \ref{eq:simul-finite} and present a simple and self-contained representer theorem for \ref{eq:triplet-finite}. We begin by formulating \ref{eq:simul-finite} and \ref{eq:triplet-finite} in a general Hilbert space $\mathcal{H}$, using the framework developed in Section \ref{sec:generalized-mahal}.
\paragraph{Metric and Preference Learning Problem in Hilbert Spaces}
Here we reformulate \ref{eq:simul-finite} and \ref{eq:triplet-finite} using the framework developed in Section \ref{sec:generalized-mahal} in general Hilbert spaces. In the task of simultaneous metric and preference learning from paired comparisons, beginning with a set of embedded samples, $S:=\{x_{1},...,x_{m}\}\subset \mathcal{H}$, we aim to learn a background preference metric associated to $A \in \mathcal{F}_{\mathcal{H}}$ and an ideal point $u \in \mathcal{H}$, given the following data,$(z^{i},y^{i}), \text{ for } 1\le i\le n$,
where $n\le \binom{m}{2}$ and $z^{i}=(z^{i}_{1},z^{i}_{2})\in S \times S$ is a pair of samples from $S$ and $y^{i}\in \{-1,+1\}$ is defined by, $y^{i}=\sgn(\Norm{z_{1}^{i}-u}_{A}^{2}-\Norm{z^{i}_{2}-u}_{A}^{2})$. Given the provided data, the learning problem can be formalized by considering the associated Empirical Risk Minimization (ERM) as follows, 
\begin{align}
\label{eq:simul-infinite}
	\min_{A\in  \mathcal{F}_{\mathcal{H}},u\in \R^{d}}\frac{1}{n}\sum_{i=1}^{n}\ell(\Norm{z^{i}_{1}-u}^{2}_{A}-\Norm{z^{i}_{2}-u}^{2}_{A},y^{i}),\tag{PI}
\end{align}
In the task of metric learning from triplet comparisons, we aim to acquire a background preference metric $\Norm{\cdot}_{A}$ associated with $A \in \mathcal{F}_{\mathcal{H}}$, given the following data, $(z^{i},y^{i}), \text{ for } 1\le i\le n$, where $n\le \binom{m}{3}$ and $z^{i}=(z^{i}_{1},z^{i}_{2},z^{i}_{3})\in S\times S\times S$ is a triplet sample and $y^{i}\in \{+1,-1\}$ is defined by,$y^{i}=\sgn(\Norm{z^{i}_{1}-z^{i}_{2}}_{A}^{2}-\Norm{z^{i}_{1}-z^{i}_{3}}_{A}^{2})$. The learning problem can be formalized by considering the associated Empirical Risk Minimization (ERM) as follows,
\begin{align}
\label{eq:triplet-infinite}
	\min_{A\in \mathcal{F}_{\mathcal{H}}}\frac{1}{n}\sum_{i=1}^{n}\ell(\Norm{(z^{i}_{1}-z^{i}_{2}}_{A}^{2}-\Norm{z^{i}_{1}-z^{i}_{3}}_{A}^{2},y^{i}).\tag{TI}
\end{align}
where $\ell$ is a general loss function. We refer to Section \ref{sec:problem-euc} for detailed discussion on the loss function and other details for these models.
\paragraph{A Representer Theorem for Simultaneous Metric and Preference Learning} 
Note that the search space, $\mathcal{F}_{\mathcal{H}} \times \mathcal{H}$, are infinite dimensional when $\mathcal{H}$ has infinite dimension. We would like to convert the infinite-dimensional problem represented by \ref{eq:simul-infinite} into a finite-dimensional equivalent. We demonstrate that a solution to the problem \ref{eq:simul-infinite} can be obtained by solving a finite-dimensional equivalent. Additionally, we show that regularizing the problem with the appropriate norm associated with elements in $\mathcal{F}_{\mathcal{H}}$ allows the entire process to be viewed as a Representer Theorem. First, set $V:=\spn\{x_{1},...,x_{m}\}$ and consider the following finite dimensional problem,
\begin{align}
\label{eq:simul-finite-hilbert}
	\min_{A\in \mathcal{F}_{V},u\in V}\frac{1}{n}\sum_{i=1}^{n}\ell(\Norm{z^{i}_{1}-u}^{2}_{A}-\Norm{z^{i}_{2}-u}^{2}_{A},y^{i})\tag{PFH},
\end{align}

Note that $V$ is finite dimensional Hilbert space and it inherits an inner product from $\mathcal{H}$ which we denote it by $\inn{\cdot,\cdot}_{\mathcal{H}}$ again. $\mathcal{F}_{V}$ is defined in a same manner as $\mathcal{F}_{\mathcal{H}}$ but the key point here is that $\mathcal{F}_{V}$ is now finite dimensional space. We would like to know how the solutions of \ref{eq:simul-finite-hilbert} and \ref{eq:simul-infinite} are related. The following observation relates the solutions of \ref{eq:simul-finite-hilbert} to the solutions of \ref{eq:simul-infinite},
\begin{prop}
\label{prop:induced-1}
Let $A^{\ast}\in \mathcal{F}_{\mathcal{H}},u^{\ast}\in \mathcal{\mathcal{H}}$ be a solution to \ref{eq:simul-infinite}. Equip $\mathcal{H}$ with $\inn{\cdot,\cdot}_{A^{\ast}}$ and let $u^{\ast}=u^{\perp}+u^{T}$ where $u^{\perp}\in V^{\perp}$ and $u^{T} \in V$. Note that the orthogonal decomposition is with respect to $\inn{\cdot,\cdot}_{A^{\ast}}$ and not $\inn{\cdot,\cdot}_{\mathcal{H}}$. Then $(A^{\ast},u^{T})$ is also a solution to \ref{eq:simul-infinite}. Moreover, let $B^{\ast}\in \mathcal{F}_{V}$ be the Mahalanobis restrict (See Definition \ref{def:extension}) of $A^{\ast}$, then $(B^{\ast},u^{T})$ is a solution to \ref{eq:simul-finite-hilbert} with same optimal value. Conversely, let $(B^{\ast},u)$ be a solution to \ref{eq:simul-finite-hilbert}, and let $A^{\ast}\in \mathcal{F}_{\mathcal{H}}$ be any Mahalanobis extension of $B^{\ast}$ (See Definition \ref{def:extension}), then $(A^{\ast},u)$ is also a solution to \ref{eq:simul-infinite} with same optimal value. 
\end{prop}
\begin{proof}
	See Section \ref{sec:proofs}.
\end{proof}
Finally, Proposition \ref{prop:induced-1}, suggests that if one considers the regularized problem with right norm then we can view it as a representer theorem.
\begin{thm}[Representer Theorem for Simultaneous Metric and Preference Learning]
\label{thm:reper}
Let $\lambda>0$ and consider the following infinite dimensional regularized problem,
\begin{small}
\begin{align}
\label{eq:infinite-representer}
	\min_{A\in \mathcal{F}_{\mathcal{H}},u\in \mathcal{H}}\frac{1}{n}\sum_{i=1}^{n}\ell(\Norm{z^{i}_{1}-u}^{2}_{A}-\Norm{z^{i}_{2}-u}^{2}_{A},y^{i})+\lambda \Norm{u}^{2}_{A}, \tag{R}
\end{align}
\end{small}
and it's finite dimensional equivalent as follows,
\begin{small}
\begin{align}
\label{eq:finite-representer}
	\min_{A\in \mathcal{F}_{V},u\in V}\frac{1}{n}\sum_{i=1}^{n}\ell(\Norm{z^{i}_{1}-u}^{2}_{A}-\Norm{z^{i}_{2}-u}^{2}_{A},y^{i})+\lambda \Norm{u}^{2}_{A}\tag{F},
\end{align}
\end{small}

Let $A^{\ast}\in \mathcal{F}_{H}$ be any Mahalanobis extension of $B^{\ast}\in \mathcal{F}_{V}$. Then $(A^{\ast},u)$ is a solution to  \ref{eq:infinite-representer} iff $(B^{\ast},u)$ is a solution to \ref{eq:finite-representer} with same optimal value. In particular this implies that $u\in V$. 
\end{thm}
 \begin{proof}
See Section \ref{sec:proofs}.
 \end{proof}
\begin{rmk}
Theorem \ref{thm:reper} states that by solving a finite dimensional counterpart \ref{eq:finite-representer}, we can find the optimal value of \ref{eq:infinite-representer}. If $\mathcal{H}$ is an RKHS associated with a kernel function $k$, then we can demonstrate that the solutions of \ref{eq:finite-representer} can be expressed in terms of kernel terms (as shown in Proposition \ref{prop:finite-euc}), which resemble classical representer theorems.
\end{rmk}
\begin{rmk}
Note that Theorem \ref{thm:reper} has a regularization term $\Norm{u}^{2}_{A} $ stemming directly from equipping $\mathcal{H}$ with the generalized Mahalanobis inner product discussed in Section \ref{sec:generalized-mahal}. In many classical representer theorems, the regularization term arises directly from the original norm $\Norm{\cdot}^{2}_{\mathcal{H}}$. However, this classical approach fails to yield a representer theorem for simultaneous tasks. We illustrate this with an example as illustrated in Figure \ref{fig:difff}.    
\end{rmk}
%%%%%%%%%%%%%%%%%%%%%%%%%%%%%%%%
\paragraph{An Illustrative Example}
%%%%%%%%%%%%%%%%%%%%%%%%%%%%%%%%
\begin{figure*}
\begin{center}
		\includegraphics[width=0.97 \textwidth]{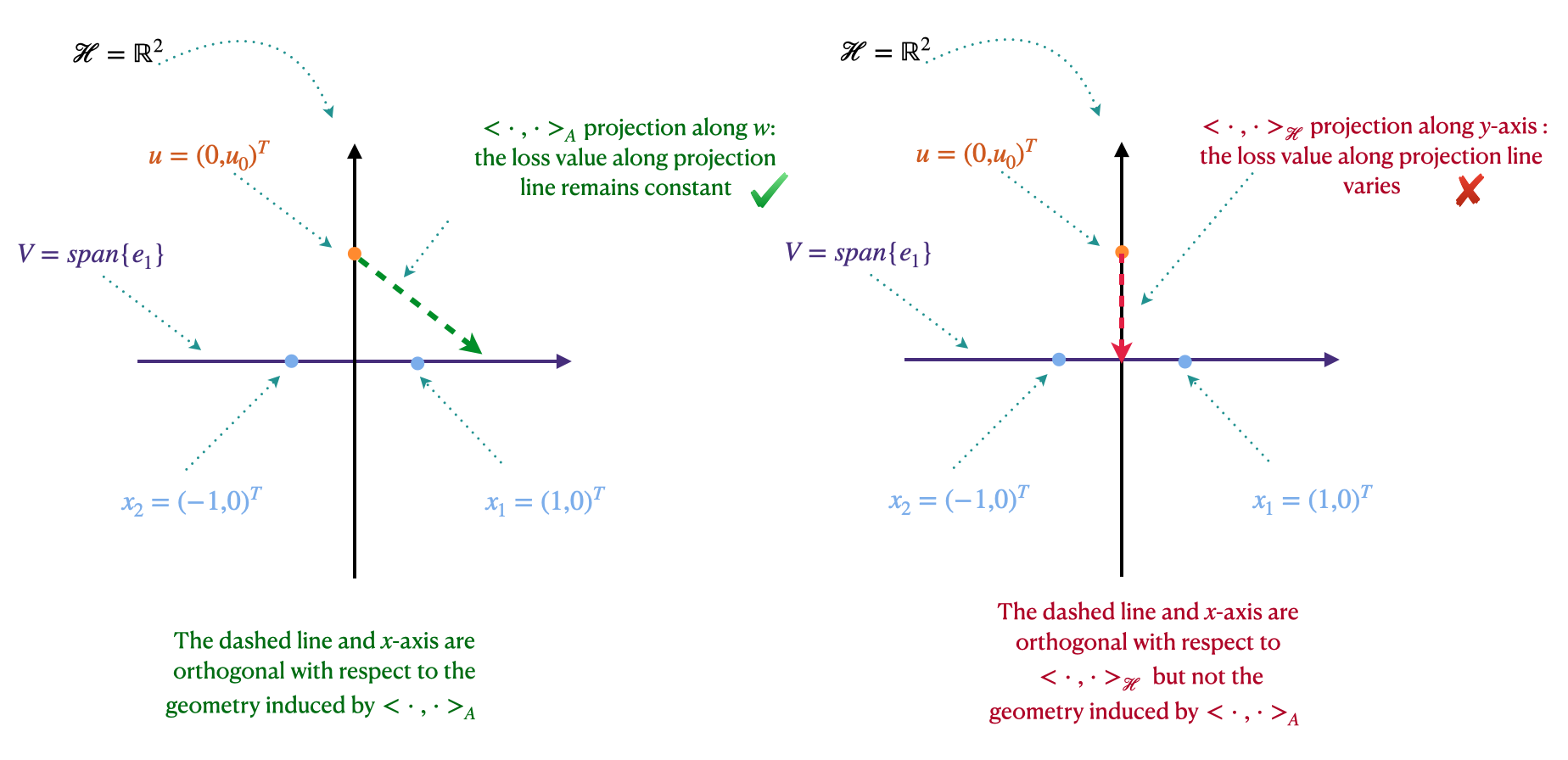}
\end{center}
\caption{Illustration of the variation of preference loss with respect to the underlying geometry. When projecting along the $\inn{\cdot,\cdot}_{\mathcal{H}}$, the loss value changes along the projection line. However, when projecting along the line induced by $\inn{\cdot,\cdot}_{A}$, the loss remains constant.}  
\label{fig:difff}
\end{figure*}
The choice of the inner product on $\mathcal{H}$ play a crucial role in the validity of Theorem \ref{thm:reper}. Consider the case $\mathcal{H}=\R^{2}$ with $S=\{x_{1},x_{2}\}$ for $x_{1}=e_{1}=(1,0)^{\top}$ and $x_{2}=-e_{1}=(-1,0)^{\top}$.
In this case, $\mathcal{H}=\R^{2},V=\spn\{e_{1}\}$. Next, assume that $u=(0,u_{0})^{\top}$ lies on the $y$-axis. This is illustrated in Figure \ref{fig:difff} . Let $A\in \mathcal{F}_{\mathcal{H}}$ has the following representation, $A=
\begin{pmatrix}
 1 &1\\
 1 & 2  
\end{pmatrix},
$
and denotes the metric on $\mathcal{H}$ that we would like to learn.  We can think of the distance difference between $\Norm{x_{1}-u}_{A}^{2}-\Norm{x_{2}-u}_{A}^{2}$ as preference loss as its sign represents whether $u$ prefers product $x_{1}$ or $x_{2}$. We would ideally like to find an equivalent problem on $V$. Following the approach in classical representer theorems, one uses the orthogonal projection of $u$ using $\inn{\cdot,\cdot}_{\mathcal{H}}$  n the $y$ axis (i.e. Euclidean projection). One can easily show that the loss depends on the location of $u$ on the projection line and this naive projection does not work. Instead, we suggest projecting along the orthogonal line that is induced by the inner product obtained from $A\in \mathcal{F}_{\mathcal{H}}$ which may not necessarily be aligned with $\inn{\cdot,\cdot}_{\mathcal{H}}$. One can easily check that for $w=(1,-1)^{\top}$,$\inn{e_{1},w}_{A}=0$, which means that the orthogonal projection with respect to geometry that is induced by $\inn{\cdot,\cdot}_{A}$ is in fact in the direction of $w$. This simple insight serves as the foundation for our representer theorem, \ref{thm:reper}, as demonstrated in Figure \ref{fig:difff}.
\paragraph{A Representer Theorem for Metric Learning from Triplet Comparison}
Next, we demonstrate how the framework we have established leads to a straightforward and self-contained representer theorem for the triplet learning task.
\begin{thm}[Representer Theorem for the Triplet Task]
\label{thm:triplet_repr}
Let $A^{\ast}\in \mathcal{F}_{\mathcal{H}}$. Then $A^{\ast}$ is a solution to 
\begin{align*}
\label{eq:RT}
\tag{RT}
	\min_{A\in \mathcal{F}_{\mathcal{H}}}\frac{1}{n}\sum_{i=1}^{n}\ell(\Norm{z^{i}_{1}-z^{i}_{2}}_{A}-\Norm{z^{i}_{1}-z^{i}_{3}}_{A},y^{i}),
\end{align*}
iff $B^{\ast}=PA^{\ast}|_{V}$, is a solution to,
\begin{align*}
\label{eq:FT}
\tag{FT}
	\min_{B\in \mathcal{F}_{V}}\frac{1}{n}\sum_{i=1}^{n}\ell(\Norm{(z^{i}_{1}-z^{i}_{2}}_{B}-\Norm{z^{i}_{1}-z^{i}_{3}}_{B},y^{i}).
\end{align*}
where,
\begin{align*}
    P:\mathcal{H}\to V\subset \mathcal{H},
\end{align*}
is projection operator with respect to $\inn{\cdot,\cdot}_{\mathcal{H}}$.
And both solutions have the same optimal value.
\end{thm}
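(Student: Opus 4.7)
The plan is to observe that the triplet objective is completely insensitive to how $A$ behaves outside $V$, and then to invoke Proposition \ref{prof:induced} to transfer optimality between the two problems. This is strictly simpler than the argument for Theorem \ref{thm:reper}, because here there is no unknown ideal point, so no orthogonal decomposition of $u$ is required, and everything reduces to a direct bookkeeping argument on restricted inner products.

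First I would observe that for any $A\in \mathcal{F}_{\mathcal{H}}$ and any $p,q\in V$ the quantity $\Norm{p-q}_{A}=\sqrt{\inn{p-q,p-q}_{A}}$ depends on $A$ only through the restriction of $\inn{\cdot,\cdot}_{A}$ to $V\times V$. Since every sample $z^{i}_{j}\in S\subset V$, all differences $z^{i}_{1}-z^{i}_{2}$ and $z^{i}_{1}-z^{i}_{3}$ also lie in $V$. Writing $L_{\mathcal{H}}(A)$ and $L_{V}(B)$ for the objectives appearing in \ref{eq:RT} and \ref{eq:FT} respectively, this shows that $L_{\mathcal{H}}(A)=L_{V}(B)$ whenever $\inn{\cdot,\cdot}_{B}=\inn{\cdot,\cdot}_{A}\vert_{V}$.

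Next I would apply Proposition \ref{prof:induced}: every $A\in \mathcal{F}_{\mathcal{H}}$ has a partner $B\in \mathcal{F}_{V}$ with $\inn{\cdot,\cdot}_{B}=\inn{\cdot,\cdot}_{A}\vert_{V}$ (uniquely determined on $V$ by Riesz representation), and conversely every $B\in \mathcal{F}_{V}$ arises in this way from some $A\in \mathcal{F}_{\mathcal{H}}$. Combined with the preceding step, this shows that $L_{\mathcal{H}}$ and $L_{V}$ attain exactly the same set of values, so $\inf_{\mathcal{F}_{\mathcal{H}}}L_{\mathcal{H}}=\inf_{\mathcal{F}_{V}}L_{V}$, and any pairing $(A^{\ast},B^{\ast})$ in the sense of Proposition \ref{prof:induced} satisfies $L_{\mathcal{H}}(A^{\ast})=L_{V}(B^{\ast})$.

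Finally I would close the biconditional. If $A^{\ast}$ solves \ref{eq:RT}, then for any $B\in \mathcal{F}_{V}$, choosing an extension $A\in \mathcal{F}_{\mathcal{H}}$ supplied by Proposition \ref{prof:induced} gives $L_{V}(B)=L_{\mathcal{H}}(A)\ge L_{\mathcal{H}}(A^{\ast})=L_{V}(B^{\ast})$, so $B^{\ast}$ solves \ref{eq:FT}. The reverse implication is symmetric, and equality of the optimal values is built into the pairing identity $L_{\mathcal{H}}(A^{\ast})=L_{V}(B^{\ast})$. There is no real obstacle to overcome: the analytic content is entirely absorbed by Proposition \ref{prof:induced}, and the only thing one must genuinely verify is the trivial but essential reduction in the first step that the triplet loss is a function solely of the restricted inner product on $V$.
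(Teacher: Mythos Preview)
Your proposal is correct and is exactly the approach the paper takes: the paper's entire proof is the single line ``The proof immediately follows from Proposition \ref{prof:induced},'' and you have simply spelled out what that line means (the objective depends only on $\inn{\cdot,\cdot}_{A}\vert_{V}$, Proposition \ref{prof:induced} gives the two-way pairing, hence the infima and minimizers match).
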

\begin{proof}
The proof immediately follows from Theorem \ref{thm:mahal-char-rest}. Intuitively speaking, the solution of \ref{eq:RT} depends solely on the distance induced by $A$ on the subspace $V$ (given there is no ideal point in this task), which is precisely characterized by Theorem \ref{thm:mahal-char-rest}.
\end{proof}

\label{subsec:rep_trip} 

\begin{rmk}
\label{rmk:comparisons}
Note that the ideal point $u$ is absent in the triplet setting, and a representer theorem for metric learning from triplet comparisons can also be derived directly from the framework presented in \cite{chatpatanasiri2010new} which also forms the basis for \cite{tatli2025metric}. Our approach, by contrast, leads to a simple, intuitive, and self-contained representer theorem for \ref{eq:triplet-finite} and has the advantage of not relying on a specific basis for $V$ (e.g., the KPCA trick as in \cite{chatpatanasiri2010new}). The learner can use their favorite orthonormal basis to transform \ref{eq:FT} into finite-dimensional Euclidean spaces. We illustrate one possible choice for this process in the next section.

\end{rmk}

%%%%%%%%%%%%%%%%%%%%
\section{Kernelized Algorithms for Metric and Preference Learning}
\label{sec:rkhs}
%%%%%%%%%%%%%%%%%%%%
\begin{figure*}[t]
        \begin{algorithm}[H]
            \begin{algorithmic}[1]
                \STATE \textbf{Input:} 
                \STATE \hspace{0.5cm} Sample set $S=\{s_{1},...,s_{m}\}\subset \R^{d}$
                \STATE \hspace{0.5cm} Dataset $\mathcal{D}=\{(z^{i},y^{i}) \mid z^{i}=(z^{i}_{1},z^{i}_{2})\in S \times S, 1\le i\le n\}$
                \STATE \hspace{0.5cm} Kernel function $k(\cdot,\cdot)$
                \STATE \hspace{0.5cm} Regularization parameter $\lambda>0$
                \STATE Solve \ref{eq:finite-representer-prop} from Proposition \ref{prop:finite-euc} using numerical optimization to obtain:
                \STATE \hspace{0.5cm} $A^{\ast}\in \mathbb{S}_{m}^{+}$ (learned metric)
                \STATE \hspace{0.5cm} $u^{\ast}\in \R^{m}$ (preference/ideal vector)
                \STATE \textbf{Output:} Trained Ideal Point model:
                \STATE \hspace{0.5cm} Learned parameters: $A^{\ast}\in \mathbb{S}_{m}^{+}$, $u^{\ast}\in \R^{m}$
                \STATE \hspace{0.5cm} Use these parameters to predict preferences for test sample pairs.
            \end{algorithmic}
            \caption{Kernelized Ideal Point Algorithm: Training}
            \label{alg:ker_ideal_train}
        \end{algorithm}
\end{figure*}
\begin{figure*}[t]
       \begin{algorithm}[H]
            \begin{algorithmic}[1]
                \STATE \textbf{Input:} 
                \STATE \hspace{0.5cm} Test sample pair $z_{\text{test}} = (z_{\text{test},1}, z_{\text{test},2}) \in S \times S$
                \STATE \hspace{0.5cm} Trained model parameters: $A^{\ast} \in \mathbb{S}_{m}^{+}, u^{\ast} \in \R^{m}$
                \STATE Compute the $\alpha$ representation using Proposition \ref{prop:finite-euc}:
                \STATE \hspace{0.5cm} $\alpha_{z_{\text{test},1}}, \alpha_{z_{\text{test},2}}\in \R^{m}$
                \STATE Compute the preference score using Proposition \ref{prop:finite-euc}:
                \STATE \hspace{0.5cm} $s = \| \alpha_{z_{\text{test},1}} - u^{\ast} \|^{2}_{A^{\ast}} - \| \alpha_{z_{\text{test},2}} - u^{\ast} \|^{2}_{A^{\ast}}$
                \STATE \textbf{Output:} Preference prediction based on $\sgn(s)$
            \end{algorithmic}
            \caption{Kernelized Ideal Point Algorithm: Testing}
            \label{alg:ker_ideal_test}
        \end{algorithm}
\end{figure*}
In Section \ref{sec:repr-thms}, we established representer theorems for metric and preference learning in a general Hilbert space. Now, we turn our attention to the case of reproducing kernel Hilbert spaces (RKHS). Specifically, we aim to apply Theorem \ref{thm:reper} and Theorem \ref{thm:triplet_repr} in the context of RKHS so that we formulate problems in Euclidean spaces for practical application. To this end, we consider the setting where $\mathcal{X}=\R^{d}$ and $k$ is a kernel on $\mathcal{X}$. Let $\mathcal{H}_{k}$ denote the RKHS associated with $k$. Next, we assume that $S^{\prime}=\{s_{1},...,s_{m}\}\subset \mathcal{X}$ is a set of embedded samples in $\mathcal{X}$. Next, using $k$, for $1\le i\le m$, we can associate the following in $\mathcal{H}_{k}$, $x_{i}:=k(s_{i},\cdot)$, and therefore we can transform the problem into $\mathcal{H}_{k}$ as in Section \ref{sec:repr-thms}. Now Theorem \ref{thm:reper} and Theorem \ref{thm:triplet_repr} apply to convert infinite dimensional optimization to finite-dimensional equivalent. The following proposition finds an equivalent optimization in finite-dimensional Euclidean space.  
\begin{prop}
\label{prop:finite-euc}
Assume $S=\{k(\cdot,s_{1}),...,k(\cdot,s_{m})\}$, forms a linearly independent set. Then:
\begin{itemize}
\item 
Solutions of \ref{eq:finite-representer} and \ref{eq:FT} can be represented in terms of kernel terms as follows,
\begin{align*}
	&A=\sum_{i,j=1}^{m}a_{ij}k(\cdot,s_{i})\otimes k(\cdot,s_{j}),\\
	&u=\sum_{i=1}^{m}b_{i}k(\cdot,s_{i}),
\end{align*}
where in above we identified the space of linear maps on $V$ with $V\otimes V$ using $\inn{\cdot,\cdot}_{\mathcal{H}}$. 
\item 
Moreover, by choosing an orthonormal basis for $V$, there is a one-to-one correspondence between solutions of \ref{eq:finite-representer} with the solutions of the following problem in $\R^{m}$. 	
\begin{align}
\label{eq:finite-representer-prop}
	\min_{A\in \mathbb{S}^{m}_{+}, u\in \R^{m}} 
	\sum_{i=1}^{n} \ell\Big( 
	& \Norm{\alpha_{z^{i}_{1}}-u}^{2}_{A} - \Norm{\alpha_{z^{i}_{2}}-u}^{2}_{A}, y^{i} \Big)  \nonumber + \lambda \Norm{u}_{A}^{2} \tag{EF}
\end{align}

and one to one correspondence between the solutions of \ref{eq:FT} with the solutions of the following problem in $\R^{m}$,
\begin{align}
\label{eq:finite-representer-triplet}
	\min_{A\in \mathbb{S}^{m}_{+}}\sum_{i=1}^{n}\ell(\Norm{\alpha_{z^{i}_{1}}-\alpha_{z^{i}_{2}}}^{2}_{A}-\Norm{\alpha_{z^{i}_{1}}-\alpha_{z^{i}_{3}}}^{2}_{A},y^{i})\tag{EFT},
\end{align}
where $\mathbb{S}^{m}_{+}$ denotes space of symmetric positive definite matrices on $\R^{m}$ and for $u\in \R^{m}$ and $A\in\mathbb{S}^{m}_{+}$,
\begin{align*}
	\Norm{u}^{2}_{A}=u^{\top}Au,
\end{align*}
 and for $x=k(\cdot,s)\in \mathcal{H}$, $\alpha_{x}=(\alpha_{1},...,\alpha_{m})^{\top}\in\R^{m}$ is the representation of $x$ with respect to the orthonormal basis obtained by Gram-Schmidt process on elements of $S$ and each $\alpha_{i}$ is defined by,
\begin{align*}
	\alpha_{i}=\frac{1}{\sqrt{D_{i-1}D_{i}}}\begin{vmatrix}
k(s_{1},s_{1}) & \dots & \dots & k(s_{1},s_{i}) \\ 
\vdots & \ddots & \ddots & \vdots \\ 
k(s_{i-1},s_{1}) & \dots & \dots & k(s_{i-1},s_{i}) \\ 
k(s,s_{1}) & \dots & \dots & k(s,s_{i})  \notag
\end{vmatrix},
\end{align*}
and $D_{0}=1$ and for $1\le i\le m$, $D_{i}$ is defined as follows,
\begin{align*}
	D_{i}=\begin{vmatrix}
k(s_{1},s_{1}) & \dots & \dots & k(s_{1},s_{i}) \\ 
\vdots & \ddots & \ddots & \vdots \\ 
k(s_{i-1},s_{1}) & \dots & \dots & k(s_{i-1},s_{i}) \\ 
k(s_{i},s_{1}) & \dots & \dots & k(s_{i},s_{i})  \notag
\end{vmatrix}.
\end{align*}
\end{itemize}
\end{prop}
\begin{proof}
	The first item follows from the fact that elements of $S$ span $V$. The proof of the second item is a direct application of  Lemma \ref{lem:reper_euc-app} relies on the determinant form of the Gram-Schmidt process and Leibniz determinant formula which we recall in the appendix. 
\end{proof}
\begin{rmk}
\label{rmk:comp_gram_iter}
In Proposition \ref{prop:finite-euc}, we used the determinant form of the Gram-Schmidt process. Similarly, the computation of $\alpha_{x}$ in Proposition \ref{prop:finite-euc} can be efficiently performed using its iterative form. The assumption of linear independence for $S$ in Proposition \ref{prop:finite-euc} is mainly for clarity. If $S$ is not linearly independent, a spanning subset can be selected using the iterative Gram-Schmidt process.

\end{rmk}
%%%%%%%%%%%%%%%%%%%%%%%%%
\subsection{Eliminating Basis Dependence via the Gram Matrix}
\label{subsec:Gram}
Proposition \ref{prop:finite-euc} relies on constructing an orthonormal basis for $V$. 
While this can be done using the Gram--Schmidt procedure, as discussed in the previous subsection, 
it is often more desirable to express the learning process in a compact form purely in terms of the data 
(e.g., the Gram matrix or kernel products of the data). 
Such formulations make the learning process more interpretable, in line with classical kernel methods and representer theorems. 
Moreover, in certain regimes, having a compact representation offers computational advantages, 
since Gram--Schmidt can be expensive or unnecessary.  

The main challenge in obtaining such a compact formulation lies in parameterizing PSD matrices 
on the subspace $V \subset \mathcal{H}_{k}$, which in turn required an orthonormal basis and hence the Gram--Schmidt step. 
We next show how this requirement can be avoided. 
The following lemma establishes that bilinear forms (and in particular PSD matrices) 
can be expressed directly in terms of the Gram matrix of the data points generating the subspace.

\begin{lem}
\label{lem:gram-lemma}
Let \( (V,\inn{\cdot,\cdot}_{V}) \) be a real inner product space, and let \( \{v_1, \dots, v_m\} \subset V \) be a (possibly linearly dependent) set of vectors. Let \( G \in \mathbb{R}^{m \times m} \) denote the Gram matrix:
\[
G_{ij} := \langle v_i, v_j \rangle_V.
\]
Let \( A \in \mathbb{R}^{m \times m} \), and define the bilinear form \( \widetilde{A} \in V \otimes V \) as:
\[
\widetilde{A} := \sum_{i,j=1}^m a_{ij} \, v_i \otimes v_j.
\]
Then for any \( u=(u_1,...,u_m)^{\top} \in \mathbb{R}^m \), the vector \( \bar{u} := \sum_{k=1}^m u_k v_k \in \operatorname{span}\{v_i\} \) satisfies:
\[
\langle \widetilde{A}, \bar{u} \otimes \bar{u} \rangle = u^\top G A G u.
\]

\end{lem}
\begin{proof}
	See Section \ref{sec:proofs}.
\end{proof}
A corollary of Lemma \ref{lem:gram-lemma} is that a bilinear form $\tilde{A}$ is positive semidefinite if and only if $G A G$ is positive semidefinite, as formally stated below.
\begin{cor}
\label{cor:bilinear-psd}
Let \( (V ,\inn{\cdot,\cdot}_{V}) \) be a real inner product space, and let \( \{v_1, \dots, v_m\} \subset V \) be a spanning set (not necessarily linearly independent) and let $G$ be the Gram matrix. 
Let \( A \in \mathbb{R}^{m \times m} \), and define the bilinear form:
\[
\widetilde{A} := \sum_{i,j=1}^m a_{ij} v_i \otimes v_j \in V \otimes V.
\]

Then \( \widetilde{A} \) is positive semidefinite, i.e.,
\[
\langle \widetilde{A}, u \otimes u \rangle \ge 0 \quad \text{for all } u \in \operatorname{span}\{v_1, \dots, v_m\},
\]
if and only if
\[
G A G \succeq 0.
\]
\end{cor}

Corollary \ref{cor:bilinear-psd} provides a condition for enforcing positive semidefiniteness in terms of the Gram matrix. However, to design a learning process, one must search over (or parametrize) all matrices that satisfy this property. The next lemma shows how such a parametrization can be achieved and will be useful for developing alternative learning procedures that avoid the Gram–Schmidt step.
\begin{lem}[Parametrization for PSD Bilinear Form]
\label{lem:quad-param}
Let \( G \in \mathbb{R}^{m \times m} \) be a symmetric positive semidefinite matrix with Moore--Penrose pseudoinverse \( G^\dagger \). Then:

\begin{itemize}
    \item[(i)] For any matrix \( L \in \mathbb{R}^{m \times r} \), define
    \[
    A := G^\dagger L L^\top G^\dagger.
    \]
    Then the matrix \( G A G \in \mathbb{R}^{m \times m} \) is symmetric and positive semidefinite:
    \[
    G A G = P_G L L^\top P_G \succeq 0,
    \]
    where \( P_G := G G^\dagger \) is the orthogonal projector onto \( \operatorname{Im}(G) \).

    \item[(ii)] Conversely, if \( A \in \mathbb{R}^{m \times m} \) is symmetric and \( G A G \succeq 0 \), then there exists a matrix \( L \in \mathbb{R}^{m \times r} \) such that
    \[
    A = G^\dagger L L^\top G^\dagger.
    \]
\end{itemize}
\end{lem}
\begin{proof}
See Section \ref{sec:proofs}.
\end{proof}

Finally, by combining Lemma \ref{lem:quad-param} and Corollary \ref{cor:bilinear-psd}, we obtain an equivalent formulation of Proposition \ref{prop:finite-euc}, expressed purely in terms of the Gram matrix and without relying on the Gram–Schmidt process.
\begin{prop}
\label{prop:finite-euc-Gram}
Let $S=\{k(\cdot,s_{1}),...,k(\cdot,s_{m})\}$, forms possibly linearly dependent and let \( G \in \mathbb{R}^{m \times m} \) denote the Gram matrix:
\[
G_{ij} := \langle k(\cdot,s_i), k(\cdot,s_j) \rangle_\mathcal{H}=k(s_{i},s_{j}).
\] Then:
\begin{itemize}
\item 
Solutions of \ref{eq:finite-representer} and \ref{eq:FT} can be represented in terms of kernel terms as follows,
\begin{align*}
	&A=\sum_{i,j=1}^{m}a_{ij}k(\cdot,s_{i})\otimes k(\cdot,s_{j}),\\
	&u=\sum_{i=1}^{m}b_{i}k(\cdot,s_{i}),
\end{align*}
where in above we identified the space of linear maps on $V$ with $V\otimes V$ using $\inn{\cdot,\cdot}_{\mathcal{H}}$. 
\item 
Moreover, there is a one-to-one correspondence between solutions of \ref{eq:finite-representer} with the solutions of the following problem in $\R^{m}$. 	
\begin{align}
\label{eq:finite-representer-prop-gram}
	\min_{A_{G}\in \mathbb{S}^{m}_{+}, u\in \R^{m}} 
	\sum_{i=1}^{n} \ell\Big( 
	& \Norm{e_{z^{i}_{1}}-u}^{2}_{A_{G}} - \Norm{e_{z^{i}_{2}}-u}^{2}_{A_{G}}, y^{i} \Big)  \nonumber + \lambda \Norm{u}_{A_{G}}^{2} \tag{EFG}
\end{align}

and one to one correspondence between the solutions of \ref{eq:FT} with the solutions of the following problem in $\R^{m}$,
\begin{align}
\label{eq:finite-representer-triplet-gram}
	\min_{A_{G}\in \mathbb{S}^{m}_{+}}\sum_{i=1}^{n}\ell(\Norm{e_{z^{i}_{1}}-e_{z^{i}_{2}}}^{2}_{A_G}-\Norm{e_{z^{i}_{1}}-e_{z^{i}_{3}}}^{2}_{A_{G}},y^{i})\tag{EFTG},
\end{align}
where $\mathbb{S}^{m}_{+}$ denotes space of symmetric positive definite matrices on $\R^{m}$ and for $u\in \R^{m}$ and $A\in\mathbb{S}^{m}_{+}$,
\begin{align*}
	\Norm{u}^{2}_{A}=u^{\top}Au,
\end{align*}
 and for $x=k(\cdot,s_{i})\in \mathcal{H}$, $e_{x}=e_{i}\in\R^{m}$ is the $i$-th coordinate vector and $A_{G}:=GAG$.
\end{itemize}
\end{prop}
\begin{proof}
		First item is same as the one in the Proposition \ref{prop:finite-euc}. Proof of second item follows from Corollary \ref{cor:bilinear-psd}. 
\end{proof}
\begin{rmk}
Note that Proposition~\ref{prop:finite-euc-Gram}, compared to Proposition~\ref{prop:finite-euc}, offers several advantages. 
First, it does not require any linear independence assumption, and both the metric and the ideal point admit explicit expressions in terms of kernel products of the data. 
\end{rmk}
%%%%%%%%%%%%%%%%%%%%%%%%%
%%%%%%%%%%%%%%%%%%%%%
\subsection{Kernelized Algorithms for Metric and Preference Learning}
\label{subsec:alg}
%%%%%%%%%%%%%%%%%%%%%
In this subsection, we demonstrate how Proposition \ref{prop:finite-euc} can be leveraged to develop a practical algorithm for the kernelized version of ideal point models. Note that both \eqref{eq:finite-representer-prop} and \eqref{eq:finite-representer-triplet} define optimization problems in Euclidean spaces, which can be solved using numerical optimization algorithms. Thus, given a dataset obtained from either pairwise or triplet comparisons, as described in Section \ref{sec:problem-euc}, along with a kernel function $k$, Proposition \ref{prop:finite-euc} enables us to solve \eqref{eq:finite-representer} and \eqref{eq:finite-representer-triplet} to learn the metric and/or the ideal point. Once a new sample $x$ arrives, Proposition \ref{prop:finite-euc} can be applied to compute its corresponding representation $\alpha_{x}$, which is then used to evaluate the metric or preference function under the nonlinear transformation induced by the kernel. In this sense, Proposition \ref{prop:finite-euc} also serves as the foundation for a nonlinear algorithm applicable to both \eqref{eq:finite-representer} and \eqref{eq:finite-representer-triplet}. We formalize this in Algorithm \ref{alg:ker_ideal_train} and Algorithm \ref{alg:ker_ideal_test} for the ideal point setting. Similar algorithms can be derived for the triplet setting.

\begin{rmk}
The sample set \( S \) in Algorithm \ref{alg:ker_ideal_test} varies based on the problem. If a larger set of embedded items is available but only a smaller set of pairwise comparisons is provided, preferences can be predicted using the subspace spanned by this set, as described in Proposition \ref{prop:finite-euc}. In an online setting, where raw unlabeled samples are unavailable in advance, the corresponding \( \alpha \) vector in Proposition \ref{prop:finite-euc} can be computed by projecting the collected samples onto the training subspace in \( \mathcal{H} \).
\end{rmk}
\begin{rmk}
As noted in Subsection \ref{subsec:Gram}, Lemma \ref{lem:quad-param} and Proposition \ref{prop:finite-euc-Gram} allow us to solve both \eqref{eq:finite-representer-prop} and \eqref{eq:finite-representer-triplet} numerically, purely in terms of the Gram matrix and without relying on the Gram–Schmidt process. Consequently, one can also consider modified variants of Algorithm \ref{alg:ker_ideal_train} and Algorithm \ref{alg:ker_ideal_test} for metric and preference learning. The advantage of this approach is that the resulting algorithms do not require computing explicit coefficient representations: both the metric and the ideal point can be expressed directly in terms of the full dataset. This makes the algorithms more interpretable, while avoiding the need to select a linearly independent subset or to perform orthogonalization. For training, one needs to search over all PSD matrices of the form \( G A G \), where \( G \) denotes the Gram matrix of the data. 
In practice, this can be efficiently implemented using Lemma~\ref{lem:quad-param} by setting 
\[
A = G^{\dagger} L L^{\top} G^{\dagger},
\]
where \( L \) is the model parameter.
\end{rmk}
%%%%%%%%%%%%%%%%%%%%%
\section{Experiments}
\label{sec:exper}
%%%%%%%%%%%%%%%%%%%%%%
In this section we report experimental results.
%%%%%%%%%%%%%%%%%%%%%%%%%%%%%%%%%%%%%%%%%
\subsection{Experiments of Synthetic Data}
\label{subsec:exper-synthetic}
%%%%%%%%%%%%%%%%%%%%%%%%%%%%%%%%%%%%%%%%%
In this subsection, we present additional experimental results on synthetic data to better understand the advantages of our method over previous approaches. We conduct numerical experiments using a nonlinear data distribution to demonstrate the effectiveness of our approach. The results are summarized in Table \ref{tbl:main}. Below, we explain the data distribution setup, training details, different learning settings, and the experimental results.We then report and compare our experimental results with those from prior work.

\subsection{Data Distribution Setup}
The data distribution consists of two concentric circles with Gaussian noise of variance $0.4$. In this configuration, the left portion of the larger circle and the right portion of the smaller circle are labeled as $0$, while the remaining areas are labeled as 
$1$. This is illustrated in Figure \ref{fig:enter-label}, where label $0$
is shown in red and label $1$ in blue. For our experiments, we assume that the user prefers all points labeled $0$ over those labeled 
$1$. The objective is to identify the ideal point $u$ and the positive semi-definite (PSD) matrix $A$ that align with these preferences.
\begin{rmk}
Why did we select this configuration? The configuration presented above is chosen for its non-linear representation. It is intuitively clear that neither an ideal point nor the Mahalanobis metric in $\R^{2}$ can fully capture a user's preferences, for data distribution illustrated in Figure \ref{fig:enter-label}. 

To effectively address preference learning in this context, it is necessary to transform the data distribution non-linearly using a kernel and then perform preference learning in the transformed space generated by the kernel. This is precisely the focus of our framework discussed in Section \ref{sec:rkhs}.
\end{rmk}
\subsection{Training Details}
All reported results in Table \ref{tbl:main} are averaged over three runs, with the number in parentheses indicating the standard deviation. All training is performed under the following loss function $\ell(z_{1}, z_{2}, y) := \hing\left(y \cdot \left(\|z_{1} - u\|_{A}^{2} - \|z_{2} - u\|_{A}^{2}\right)\right) + \lambda \|u\|_{A}^{2}$ for a data pair $(z_{1},z_{2})$ and label $y$. The training loss is optimized using the Adam optimizer with a learning rate of $lr = 0.01$, and the PSD condition is enforced using Cholesky decomposition. The training-to-test data ratio is $0.3$. The testing error is reported using the $0/1$ loss. The kernel $k_{C}$ defined by $k_{C}(x,y):=\inn{x,y}+\Norm{x}^{2}\Norm{y}^{2}$ is used in the experiments. Please see \url{https://github.com/PeymanMorteza/Metric-Preference-Learning-RKHS} for other details.
\subsection{Comparison of Different Learning Methods}
\label{subsec:compare}
The results are reported in Table \ref{tbl:main}. The following experimental setting are considered. 
  \begin{figure}
    \centering
    \includegraphics[width=0.52\linewidth]{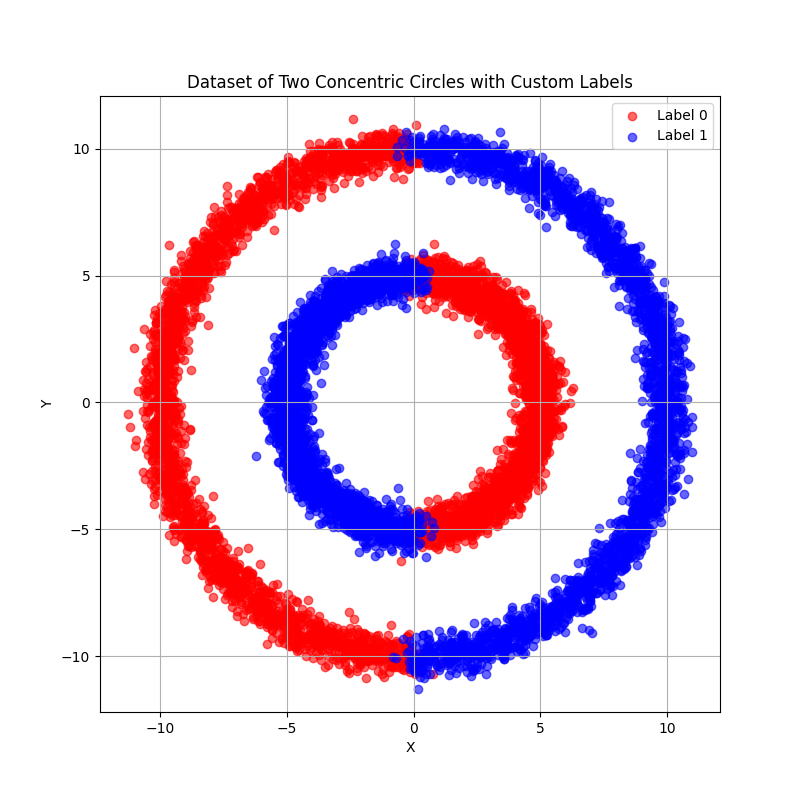}
    \caption{The data distribution is supported along two concentric circles with Gaussian noise of variance $0.4$. In this setup, the left portion of the larger circle and the right portion
of the smaller circle are labeled as $0$, while the remaining regions are labeled as $1$. This is illustrated in the figure above, with label $0$ depicted in red and label $1$ in blue. For our experiments, we assume that the user prefers all points labeled $0$ over those labeled $1$, and the objective is to search for the ideal point $u$ and the PSD matrix $A$ that align with these preferences.}
    \label{fig:enter-label}
\end{figure}
\begin{table}[h!]
    \centering
    \setlength{\abovecaptionskip}{10pt}
    \renewcommand{\arraystretch}{1.3}
    \begin{tabular}{|l|c|c|}
        \hline
        \textbf{Learning Method} & \textbf{Average Training Loss (Std Dev)} & \textbf{Testing Error(Std Dev)} \\
        \hline
        No Kernel, No Regularization & 1.07(0.04) &62.0\% (8.3)\\
        \hline
        Kernel $k_{C}$, No Regularization & 0.2 (0.3) & 17.0 \% (25)\\
        \hline
        %\rowcolor{green}
        Kernel $k_{C}$, With Regularization \bf{(ours)} & 0.25 (0.2) & 0.6\% (0.9)\\
        \hline
    \end{tabular}
    \caption{\textbf{Comparison of Different Learning Settings:} When regularizing the problem in the kernelized setting using the norm inspired by our Representer Theorem (See Theorem \ref{thm:reper} and Proposition \ref{prop:finite-euc})) , we achieve a solution with low training error that generalizes well to the test data. See Subsection \ref{subsec:compare} for further explanation.
}
\label{tbl:main}

\end{table}
\begin{itemize}
    \item \textbf{1. No Kernel, No Regularization:} This setting corresponds to what is considered in prior work (e.g. \cite{canal2022one}) in simultaneous task of metric and preference learning. Both the training loss and testing error are high. This is primarily due to the non-linear data representation, which this framework is not equipped to handle effectively.
    \item \textbf{2. Kernel $k_{C}$, No Regularization:} The kernel $k_{C}$ (defined above) is used to transform the training data, and the associated optimization problem, as given by Proposition \ref{prop:finite-euc}, is solved with no regularization (i.e., $\lambda = 0$). Small training loss with a non-trivial testing error indicates benign overfitting. This can be attributed to the lack of a regularization parameter, which plays a key role in our Representer Theorem (Theorem \ref{thm:reper}).
    \item \textbf{3. Kernel $k_{C}$, With Regularization (Ours):} The kernel $k_{C}$ (defined below) is used to transform the training data, and the associated optimization problem, as given by Proposition \ref{prop:finite-euc}, is solved with non-zero regularization(i.e., $\lambda = 0.0001$). Both the training loss and test loss are low, indicating that the method finds a solution that generalizes well in this data setting, particularly when the problem is regularized with the appropriate norm. (a
    s explained in Theorem \ref{thm:reper} and Proposition \ref{prop:finite-euc}).
\end{itemize}
 
%%%%%%%%%%%%%%%%%%%%%%%%%%%%%%%%%%%%%%%%%
\subsection{Experiments of Real Data}
\label{subsec:exper-real}
%%%%%%%%%%%%%%%%%%%%%%%%%%%%%%%%%%%%%%%%%
 \begin{table}[t]
    \centering
    % \scriptsize % Shrinks font size to fit within one column
    %\renewcommand{\arraystretch}{1.1} % Reduce row spacing
    %\setlength{\tabcolsep}{2pt} % Reduce column spacing
    \setlength{\abovecaptionskip}{10pt}
    \renewcommand{\arraystretch}{1.3}
    \begin{tabular}{|p{5.9cm}|c|c|}
        \hline
        \textbf{Algorithm} & \textbf{Chameleon} & \textbf{FlatLizard} \\
        \hline
        BT~\cite{bradley1952rank} & 0.83±0.03 & 0.86±0.06 \\
        BT-LR~\cite{chu2005preference} & 0.71±0.03 & 0.84±0.04 \\
        BT-GP~\cite{chu2005preference} & 0.75±0.04 & 0.80±0.05 \\
        RC~\cite{negahban2012iterative} & 0.61±0.06 & 0.66±0.05 \\
        RRC~\cite{jain2020spectral} & 0.61±0.03 & 0.66±0.01 \\
        SVD~\cite{cucuringu2016simple} & 0.72±0.08 & 0.69±0.05 \\
        SVDC~\cite{chau2022spectral} & 0.65±0.06 & 0.81±0.05 \\
        SVDK~\cite{chau2022spectral} & 0.76±0.06 & 0.68±0.05 \\
        Serial~\cite{fogel2016spectral} & 0.79±0.04 & 0.70±0.05 \\
        C-Serial~\cite{chau2022spectral} & 0.80±0.03 & 0.88±0.01 \\
        CC~\cite{chau2022spectral} & 0.66±0.10 & 0.78±0.08 \\
        KCC~\cite{chau2022spectral} & 0.71±0.06 & 0.78±0.03 \\
        \hline
        {Vanilla Ideal Point}~\cite{xu2020simultaneous,canal2022one} & 0.64±0.06 & 0.70±0.06 \\
        \hline
        \textbf{Kernelized Ideal Point (Ours)} & \textbf{0.83±0.08} & \textbf{0.78±0.05} \\
        \hline
    \end{tabular}
    \caption{Performance (accuracy) comparison of various methods for rank inference. The reported results for methods other than ideal point models are courtesy of \cite{chau2022spectral}. The results for the Vanilla Ideal Point Method, inspired by \cite{xu2020simultaneous, canal2022one}, are obtained by numerically solving \ref{eq:simul-finite}.}
    \label{tab:performance_comparison}
\end{table} 
In this subsection, we report experimental results to evaluate the performance of Algorithm \ref{alg:ker_ideal_train} and Algorithm \ref{alg:ker_ideal_test} on real data. We use two data sets. {\bf Flatlizard Competition:} This dataset \cite{whiting2009flat} records contests among male flat lizards.
{\bf Cape Dwarf Chameleons Contest:} This dataset \cite{stuart2006multiple} documents contests among male chameleons with associated physical measurements.

\paragraph{Experimental Setting and Results}

We split the dataset into a 70/30 train-test ratio and use it for rank inference. We compare accuracy of Algorithm \ref{alg:ker_ideal_train} and Algorithm \ref{alg:ker_ideal_test}, using RBF kernel with other spectral and probabilistic ranking methods based on pairwise comparisons (see \cite{chau2022spectral,chu2005preference}). Additionally, we compare the results with the vanilla ideal point method, a variant of which has been studied in prior works (see, for instance, \cite{canal2022one,xu2020simultaneous,massimino2021you}). The results, averaged over 10 runs, are reported in Table \ref{tab:performance_comparison}.

%%%%%%%%%%%%%%%%%%%%
\section{Related Works}
\label{sec:related}
%%%%%%%%%%%%%%%%%%%%
\paragraph{Representer Theorems}
Representer theorems play a central role in machine learning by reducing infinite-dimensional optimization problems to finite-dimensional ones. The concept originated in approximation theory \cite{de1966splines,kimeldorf1971some,wahba1990spline} and was later extended to a wide range of learning problems \cite{scholkopf2001generalized}. Representer theorems for Mahalanobis metric learning were established in \cite{chatpatanasiri2010new}, and this framework was directly used in \cite{tatli2025metric} to analyze the triplet setting. However, as we show in this work, these formulations do not extend to the simultaneous task of metric and preference learning.

\paragraph{Ranking Algorithms}
Ranking algorithms based on pairwise comparisons can broadly be divided into two categories. The first class follows a probabilistic approach, where generative models are employed to estimate preference probabilities. A classical example is the Bradley–Terry (BT) model \cite{bradley1952rank}, which assigns a latent score to each item and models the probability of one item being preferred over another as a function of their scores. The model parameters are typically learned using maximum likelihood estimation, and numerous extensions have been developed to handle large-scale and noisy comparison data \cite{chu2005preference,chau2022spectral}.

The second class consists of spectral methods, which exploit the spectral properties of matrices constructed from pairwise comparisons; see \cite{vigna2016spectral} for a comprehensive overview.

\paragraph{Ideal Point Models}
Prior research on simultaneous metric and preference learning has explored several directions. \cite{jamieson2011active,jamieson2011low} studied the problem from an efficiency perspective, proposing active learning strategies that minimize the number of pairwise queries needed to accurately recover rankings. \cite{massimino2021you} analyzed the recovery guarantees of such models under randomized settings, focusing on the accuracy of reconstructing the underlying preference structure. \cite{xu2020simultaneous} extended the framework to cases where the underlying metric is an unknown Mahalanobis distance, while \cite{canal2022one} generalized the classical ideal point model to settings with multiple latent ideal points. Despite these advances, all of these works are confined to finite-dimensional Euclidean spaces. By contrast, our work considers a nonlinear version of the ideal point model and establishes the first representer theorem for its kernelized version in reproducing kernel Hilbert spaces (RKHSs).%, thereby extending the scope of simultaneous metric and preference learning beyond the Euclidean setting.
\paragraph{Metric Learning}
A large body of research has focused on learning Mahalanobis metrics from data, with the main differences among approaches stemming from how supervision is defined. Broadly, metric learning methods fall into two categories: those based on pairwise similarity or dissimilarity constraints \cite{kwok2003learning}, and those based on triplet comparisons that enforce relative distance relationships \cite{schultz2003learning}. Metric learning techniques have found wide application in areas such as large-margin nearest neighbor classification \cite{weinberger2009distance,weinberger2008fast} and unsupervised learning tasks like clustering \cite{xing2002distance}. For comprehensive reviews of this literature, see \cite{kulis2013metric,bellet2013survey,suarez2021tutorial,ghojogh2022spectral} and the references therein.

%%%%%%%%%%%%%%%%%%%%
\section{Conclusion}
\label{sec:con}
%%%%%%%%%%%%%%%%%%%%
In this work, we proposed a mathematical framework to address a broad class of metric and preference learning problems within Hilbert spaces. Our framework provides a principled approach to deriving representer theorems that unify different tasks under a common perspective. Technically, we established the first representer theorem for the simultaneous task of metric and preference learning, and we derived a simple, self-contained representer theorem for metric learning from triplet comparisons. In the case of RKHSs, our representer theorems enabled us to reduce infinite-dimensional learning problems to finite-dimensional counterparts, making it possible to design efficient nonlinear algorithms for metric and preference learning. From a practical standpoint, we implemented and evaluated our algorithm on real-world rank inference benchmarks. The experiments confirmed that the proposed approach is highly competitive, outperforming vanilla ideal point models and achieving results on par with or better than strong baseline methods across multiple datasets. These findings highlight both the mathematical significance and the practical applicability of our framework. Looking forward, our framework opens several directions for further research. On the theoretical side, it would be interesting to extend the analysis to other forms of structured comparisons or to explore metrics beyond Mahalanobis. On the algorithmic side, integrating our methods into modern deep learning architectures could further broaden their impact.

\appendix
%%%%%%%%%%%%%%%%%%%%%
\section*{Appendix}
\label{sec:appendix}
%%%%%%%%%%%%%%%%%%%%%%

In the appendix, we present background information, review standard facts (see Section \ref{sec:back}), and include detailed proofs (see Section \ref{sec:proofs}) for our theorems from the main text.
%%%%%%%%%%%%%%%%%%%%%%%%%%%%%%%%%
\section{Proofs of Main Results}
\label{sec:proofs}
%%%%%%%%%%%%%%%%%%%%%%%%%%%%%%%%%
In this section, we provide complete proof for the theorems introduced in the previous sections. 

\begin{prop*}[Proposition~\ref{prop:inner_equiv} (restated)]
%\label{prop:inner_equiv_app}
%{InnerEquiv}\label{prop:inner_equiv}
Let $A\in \mathcal{F}_{\mathcal{H}}$, then,
\begin{align*}
	\inn{x,y}_{A}:=\inn{Ax,y}_{\mathcal{H}},
\end{align*}
defines an inner product on $\mathcal{H}$ that is equivalent to $\inn{\cdot,\cdot}_{\mathcal{H}}$. Conversely, let $g(\cdot,\cdot)$ be an inner product on $\mathcal{H}$ equivalent to $\inn{\cdot,\cdot}_{\mathcal{H}}$ then there exist a unique $A\in \mathcal{F}_{\mathcal{H}}$ such that $g=\inn{\cdot,\cdot}_{A}$.
\end{prop*}
\begin{proof}
First, we show for $A\in \mathcal{F}_{\mathcal{H}}$, $\inn{\cdot,\cdot}_{A}$ defines an inner product. This part easily follows from the definition. For $x\neq 0$, we have,
\begin{align*}
	\inn{x,x}_{A}=\inn{Ax,x}_{\mathcal{H}}>0,
\end{align*}	
where we used the fact that $A$ is a (strictly) positive operator. Next,
\begin{align*}	\inn{x,y}_{A}=\inn{Ax,y}_{\mathcal{H}}=\inn{x,Ay}_{\mathcal{H}}=\inn{Ay,x}_{\mathcal{H}}=\inn{y,x}_{A},
\end{align*}
where we used the fact that $A$ is self-adjoint and $\inn{\cdot,\cdot}_{\mathcal{H}}$ is an inner product. 	Linearity for each coordinate follows from the fact that $A$ is a linear operator and $\inn{\cdot,\cdot}_{\mathcal{H}}$ is an inner product.
Next, since $A$ is a bounded operator there exist $a_{1}$ such that for $x\in \mathcal{H}$,
\begin{align*}
    \inn{x,Ax}_{\mathcal{H}}\le a_{1}\inn{x,x}_{\mathcal{H}},
\end{align*}
on the other hand, since $A$ is strictly positive, there exist $a_{2}$ such that,
\begin{align*}
    a_{2}\inn{x,x}_{\mathcal{H}}\le \inn{x,Ax}_{\mathcal{H}},
\end{align*}
above two implies that $\inn{\cdot,\cdot}_{A}$ is equivalent to $\inn{\cdot,\cdot}_{\mathcal{H}}$. Therefore, $\inn{\cdot,\cdot}_A$ is an inner product on $\mathcal{H}$ that is equivalent to $\inn{\cdot,\cdot}_{\mathcal{H}}$. Conversely, let $g$ be an inner product on $\mathcal{H}$ that is equivalent to $\inn{\cdot,\cdot}_{\mathcal{H}}$, therefor there exist $c_{1},c_{2}>0$, such that for all $y\in \mathcal{H}$,
\begin{align*}
   c_{1} \inn{y,y}_{\mathcal{H}}\le g(y,y)\le c_{2}\inn{y,y}_{\mathcal{H}}.
\end{align*}
Next, for a fixed $x\in \mathcal{H}$, consider the following functional on $\mathcal{H}$,
\begin{align*}
&\phi_{x}:\mathcal{H} \to \R\\
&\phi_{x}(y)=g(x,y),
\end{align*}
$\phi_{x}$ is a linear functional on $\mathcal{H}$ because $g$ is an inner product (e.g. linear on each coordinate). Next, we show that $\phi_{x}$ is a bounded functional,
\begin{align*}
    \abs{\phi_{x}(y)}=g(x,y)\le \sqrt{g(x,x)g(y,y)} \le C\Norm{y}_{\mathcal{H}},
\end{align*}
where $C:=\sqrt{c_{2}\cdot g(x,x)}$ and we used Cauchy-Schwartz inequality and the fact that $g$ is equivalent to $\inn{\cdot,\cdot}_{\mathcal{H}}$. Therefore, by Riesz representation theorem, there exist a unique $z_{x} \in \mathcal{H}$ so that, 
\begin{align*}
    \phi_{x}(y)=\inn{z_{x},y}_{\mathcal{H}},
\end{align*}
Now define,
\begin{align*}
    &A:\mathcal{H} \to \mathcal{H},\\
    &A(x)=z_{x},
\end{align*}
It is clear from the above construction that $A$ is a linear operator. Next we show $A$ is strictly positive,
\begin{align*}
    \inn{x,Ax}_{\mathcal{H}}=\inn{x,z_{x}}_{\mathcal{H}}=\phi_{x}(x)=g(x,x)\ge c_{1}\inn{x,x}_{\mathcal{H}},
\end{align*}
To show that $A$ is self-adjoint, for $x,y\in \mathcal{H}$, we have,
\begin{align*}
    \inn{y,Ax}_{\mathcal{H}}=\inn{y,z_{x}}_{\mathcal{H}}=\phi_{x}(y)=g(x,y)=g(y,x)=\phi_{y}(x)= \inn{x,z_{y}}_{\mathcal{H}}=\inn{Ay,x}_{\mathcal{H}},
\end{align*}
where we used the fact that $g$ is symmetric. To show uniqueness, assume there are two operators $A,B\in \mathcal{F}_{\mathcal{H}}$ such that for all $x,y\in \mathcal{H}$,
\begin{align*}
  &\inn{x,Ay}_{\mathcal{H}}=\inn{x,By}_{\mathcal{H}}.
\end{align*}
Above implies that for all $x,y \in \mathcal{H}$,
\begin{align*}
  \inn{x,(A-B)y}_{\mathcal{H}}\equiv 0 \implies Ay=By \implies A=B,
\end{align*}
and we are done. 
\end{proof}

The subsequent lemma demonstrates that when $\mathcal{H}$ is finite-dimensional with a predetermined orthonormal basis, the elements of $\mathcal{F}_{\mathcal{H}}$ align with the standard Mahalanobis norm, corresponding to the representation of the operator with respect to the basis. 
\begin{lem*}[Lemma \ref{lem:matrix-repr} (restated)]
%\label{lem:matrix-repr-app}
    Let $\mathcal{H}$ be a $n$-dimensional Hilbert space and $\{e_{1},...,e_{n}\}$ be an orthonormal basis for $\mathcal{H}$. Let $M$ denote the representation of $A$ with respect to this basis. Then the inner product associated with the Mahalanobis norm coming from $M$ coincides with $\inn{\cdot,\cdot}_{A}$. 
\end{lem*}
\begin{proof}
Let $v,w\in \mathcal{H}$. We have,
\begin{align*}
    v=\sum_{i=1}^{n}\inn{v,e_{i}}_{\mathcal{H}}e_{i},\\
    w=\sum_{i=1}^{n}\inn{w,e_{i}}_{\mathcal{H}}e_{i},
\end{align*}
therefore, representation of $v$, $w$ are given by,
\begin{align*}
    &V=(v_{1},...,v_{n})^{\top} \in \R^{n},\\
    &W=(w_{1},...,w_{n})^{\top} \in \R^{n},
\end{align*}
where for $1\le i\le n$, $v_{i}=\inn{v,e_{i}}_{\mathcal{H}}$ and $w_{i}=\inn{w,e_{i}}_{\mathcal{H}}$. Next, for $1\le i\le n$ and $1\le j\le n$, $M_{ij}=\inn{Ae_{i},e_{j}}_{\mathcal{H}}$. Next, we have, 
\begin{align*}
    \inn{v,w}_{A}=\inn{v,Aw}_{\mathcal{H}}&=\inn{\sum_{i=1}^{n}\inn{v,e_{i}}_{\mathcal{H}}e_{i},A(\sum_{i=1}^{n}\inn{w,e_{i}}_{\mathcal{H}}e_{i})}_{\mathcal{H}}\\
    &=\inn{\sum_{i=1}^{n}v_{i}e_{i},\sum_{i=1}^{n}w_{i}A(e_{i})}_{\mathcal{H}}\\
    &=\sum_{1\le i,j\le n} v_{i}w_{j}\inn{e_{i},Ae_{j}}_{\mathcal{H}}=V^{\top}MW.
\end{align*}
\end{proof}
\begin{deff*}[Definition \ref{def:extension} (restated)]
Let $\mathcal{H}$ be a Hilbert space and $V\subset \mathcal{H}$ be a finite-dimensional subspace. Let $A\in \mathcal{F}_{\mathcal{H}}$ and $B \in \mathcal{F}_{V}$. We say $A$ is {\it Mahalanobis extension} of $B$ ((or $B$ is a {\it Mahalanobis restrict} of $A$) if,
\begin{align*}
   \restr{ \inn{\cdot,\cdot}_{A}}{V}=\inn{\cdot,\cdot}_{B}.
\end{align*}
\end{deff*}
The following theorem completely characterizes when $A\in \mathcal{F}_{\mathcal{H}}$ on ambient space is the Mahalanobis extension of $B\in \mathcal{F}_{V}$.
\begin{thm*}[Theorem \ref{thm:mahal-char-rest} (restated)]
$A\in \mathcal{F}_{\mathcal{H}}$ is Mahalanobis extension of $B\in \mathcal{F}_{V}$ iff,
\begin{align*}
    B=PA|_{V},
\end{align*}
where,
\begin{align*}
    P:\mathcal{H}\to V\subset \mathcal{H},
\end{align*}
is projection operator with respect to $\inn{\cdot,\cdot}_{\mathcal{H}}$. Moreover, each $B\in \mathcal{F}_{V}$ has at least one Mahalanobis extension.
\end{thm*}
\begin{proof}
    Consider the following orthogonal decomposition with respect to $\inn{\cdot,\cdot}_{\mathcal{H}}$,
\begin{align*}
	\mathcal{H}=V \oplus V^{\perp}.
\end{align*}

First, consider $A\in \mathcal{F}_{\mathcal{H}}$. Consider the projection operator with respect to $\inn{\cdot,\cdot}_{\mathcal{H}}$,
\begin{align*}
	P:\mathcal{H} \to V,
\end{align*}
and set,
\begin{align*}
	&B:V \to V\\
	&B(x)=PA(x).
\end{align*}
It is clear from the definition that $B$ is bounded and linear. Next, for $x\in V$, we also have,
\begin{align*}	\inn{Bx,x}_{\mathcal{H}}=\inn{PAx,x}_{\mathcal{H}}=\inn{Ax,x}_{\mathcal{H}}>c\Norm{x}^{2}_{\mathcal{H}},
\end{align*}
which shows that $B$ is positive. Next, for $x,y\in V$,
\begin{align*}
	\inn{Bx,y}_{\mathcal{H}}=\inn{PAx,y}_{\mathcal{H}}=\inn{Ax,y}_{\mathcal{H}}=\inn{x,Ay}_{\mathcal{H}}=\inn{x,PAy}_{\mathcal{H}},
\end{align*}
where we used the fact that $x,y\in V$ and $A$ is self-adjoint. This shows that $B$ is self-adjoint. Next, for $x,y\in V$, we have,
\begin{align*}
	\inn{x,y}_{A}=\inn{Ax,y}_{\mathcal{H}}=\inn{PAx+(Ax)^{\perp},y}_{\mathcal{H}}=\inn{PAx,y}_{\mathcal{H}}=\inn{x,y}_{B},
\end{align*}
therefore we showed that  $A$ is Mahalanobis extension of $B=PA|_{V}$. Conversely, assume $A\in \mathcal{F}_{\mathcal{H}}$ is a Mahalanobis extension of $B$ then setting $C=PA|_{V}$ we know that,
\begin{align*}
    \inn{\cdot,\cdot}_{C}=\inn{\cdot,\cdot}_{B},
\end{align*}
and it follows from Proposition \ref{prop:inner_equiv} that $B=C$. Finally, for $B\in \mathcal{F}_{V}$, define $A:=B\oplus \id$ by,
\begin{align*}
	&A:\mathcal{H} \to \mathcal{H}\\
	&A (h^{T}+h^{\perp}):=B(h^{T})+h^{\perp},
\end{align*}
where $h=h^{T}+h^{\perp}$ for $h^{T}\in V$ and $h^{\perp}\in V^{\perp}$. It can be easily checked that $A$ is positive, self-adjoint, and bounded (See Lemma \ref{lem:self-id}). It also follows from the definition of $A$ that,
\begin{align*}
	\restr{A}{V}=B,
\end{align*}
therefore $B$ has at least one Mahalanobis extension and we are done.

\end{proof}
\begin{lem}
\label{lem:self-id}
For $A\in \mathcal{F}_{V}$, define $B:=A\oplus \id$ by,
\begin{align*}
	&B:\mathcal{H} \to \mathcal{H}\\
	&B (h^{T}+h^{\perp}):=A(h^{T})+h^{\perp}.
\end{align*}
where the following orthogonal decomposition with respect to $\inn{\cdot,\cdot}_{\mathcal{H}}$ is considered,
\begin{align*}
	\mathcal{H}=V \oplus V^{\perp},
\end{align*}
and $h\in \mathcal{H}$ is represented as $h=h^{T}+h^{\perp}$. Then $B$ is positive, self-adjoint, and bounded.
\end{lem}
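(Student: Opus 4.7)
The plan is to verify the three properties (bounded, self-adjoint, positive) one at a time, each time exploiting the fact that the orthogonal decomposition $\mathcal{H}=V\oplus V^{\perp}$ is with respect to $\inn{\cdot,\cdot}_{\mathcal{H}}$, so that the cross terms $\inn{v,w}_{\mathcal{H}}$ with $v\in V$ and $w\in V^{\perp}$ vanish. Linearity of $B$ is immediate from the linearity of the two orthogonal projections $P_{V}$ and $P_{V^{\perp}}$ (which are well-defined because $V$ is finite dimensional, hence closed), combined with the linearity of $A$ on $V$ and of $\id$ on $V^{\perp}$.

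For boundedness, I would apply the Pythagorean identity to $Bh=Ah^{T}+h^{\perp}$ to obtain $\Norm{Bh}_{\mathcal{H}}^{2}=\Norm{Ah^{T}}_{\mathcal{H}}^{2}+\Norm{h^{\perp}}_{\mathcal{H}}^{2}\le \Norm{A}^{2}\Norm{h^{T}}_{\mathcal{H}}^{2}+\Norm{h^{\perp}}_{\mathcal{H}}^{2}\le (\Norm{A}^{2}+1)\Norm{h}_{\mathcal{H}}^{2}$. Since $V$ is finite dimensional, $A$ is automatically bounded, so $B$ is bounded.

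For self-adjointness, I would pick $x=x^{T}+x^{\perp}$ and $y=y^{T}+y^{\perp}$, expand $\inn{Bx,y}_{\mathcal{H}}=\inn{Ax^{T}+x^{\perp},y^{T}+y^{\perp}}_{\mathcal{H}}$, and note that the two cross terms $\inn{Ax^{T},y^{\perp}}_{\mathcal{H}}$ and $\inn{x^{\perp},y^{T}}_{\mathcal{H}}$ both vanish since $Ax^{T}\in V$ and $V\perp V^{\perp}$. What remains is $\inn{Ax^{T},y^{T}}_{\mathcal{H}}+\inn{x^{\perp},y^{\perp}}_{\mathcal{H}}$, and applying self-adjointness of $A$ on $V$ lets me rewrite this as $\inn{x^{T},Ay^{T}}_{\mathcal{H}}+\inn{x^{\perp},y^{\perp}}_{\mathcal{H}}=\inn{x,By}_{\mathcal{H}}$ by the same orthogonality argument run in reverse.

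Positivity is the same trick: $\inn{Bx,x}_{\mathcal{H}}=\inn{Ax^{T},x^{T}}_{\mathcal{H}}+\Norm{x^{\perp}}_{\mathcal{H}}^{2}$, and both summands are nonnegative by the positivity of $A$ on $V$, with the sum strictly positive whenever $x\neq 0$ (at least one of $x^{T},x^{\perp}$ is nonzero, and positivity of $A$ gives strict positivity on nonzero $x^{T}$). There is no real obstacle here; the whole argument is a clean bookkeeping exercise, and the only step that requires any care is verifying that the orthogonal decomposition makes $B$ well-defined and linear, which uses closedness of the finite dimensional subspace $V$.
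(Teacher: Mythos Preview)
Your proof is correct and follows essentially the same approach as the paper: both verify linearity, boundedness, self-adjointness, and positivity by expanding $B$ on the orthogonal decomposition $\mathcal{H}=V\oplus V^{\perp}$ and using that cross terms vanish together with the corresponding properties of $A$ on $V$. Your version is, if anything, slightly more explicit (you give a concrete operator-norm bound and note that $V$ being finite dimensional ensures closedness), but there is no substantive difference.
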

\begin{proof}
It is clear from the definition that $B$ is a linear operator. To show that $B$ is positive,
\begin{align*}
	\inn{Bx,x}_{\mathcal{H}}&=\inn{A(x^{T})+x^{\perp},x^{T}+x^{\perp}}_{\mathcal{H}}\\&=\inn{A(x^{T}),x^{T}}_{\mathcal{H}}+\inn{x^{\perp},x^{\perp}}_{\mathcal{H}} > c\Norm{x^{T}}^{2}_{\mathcal{H}}+\Norm{x^{\perp}}^{2}_{\mathcal{H}}\ge\min(1,c) \Norm{x}^{2}_{\mathcal{H}}>0,
\end{align*}	
where we used the fact that $A$ is a positive operator (with constant $c$) and applied the Pythagorean theorem. To show that $B$ is bounded we have,
\begin{align*}
	\inn{Bx,Bx}_{\mathcal{H}}=\inn{A(x^{T})+x^{\perp},A(x^{T})+x^{\perp}}_{\mathcal{H}}=\inn{A(x^{T}),A(x^{T})}_{\mathcal{H}}+\inn{x^{\perp},x^{\perp}}_{\mathcal{H}},
\end{align*}
and boundedness follows from the fact that $A$ is bounded. To show that $B$ is self-adjoint, we have,
\begin{align*}
	\inn{Bx,y}_{\mathcal{H}}=\inn{A(x^{T})+x^{\perp},y}_{\mathcal{H}}&=\inn{x^{\perp},y}_{\mathcal{H}}+\inn{A(x^{T}),y}_{\mathcal{H}}\\
&=\inn{x^{\perp},y^{\perp}}_{\mathcal{H}}+\inn{x^{T},A(y^{T})}_{\mathcal{H}}	\\
&=\inn{x,y^{\perp}}_{\mathcal{H}}+\inn{x,A(y^{T})}_{\mathcal{H}}	\\
&=\inn{x,By}_{\mathcal{H}}.
\end{align*}
Thus, $B$ is positive, self-adjoint, and bounded.
\end{proof}

\begin{prop*}[Proposition \ref{prop:induced-1} (restated)]
%\label{prop:induced-1-app}
Let $A^{\ast}\in \mathcal{F}_{\mathcal{H}},u^{\ast}\in \mathcal{\mathcal{H}}$ be a solution to \ref{eq:simul-infinite}. Equip $\mathcal{H}$ with $\inn{\cdot,\cdot}_{A^{\ast}}$ and let $u^{\ast}=u^{\perp}+u^{T}$ where $u^{\perp}\in V^{\perp}$ and $u^{T} \in V$. Note that the orthogonal decomposition is with respect to $\inn{\cdot,\cdot}_{A^{\ast}}$ and not $\inn{\cdot,\cdot}_{\mathcal{H}}$. Then $(A^{\ast},u^{T})$ is also a solution to \ref{eq:simul-infinite}. Moreover, let $B^{\ast}\in \mathcal{F}_{V}$ be the Mahalanobis restrict (See Definition \ref{def:extension}) of $A^{\ast}$, then $(B^{\ast},u^{T})$ is a solution to \ref{eq:simul-finite-hilbert} with same optimal value. Conversely, let $(B^{\ast},u)$ be a solution to \ref{eq:simul-finite-hilbert}, and let $A^{\ast}\in \mathcal{F}_{\mathcal{H}}$ be any Mahalanobis extension of $B^{\ast}$ ((See Definition \ref{def:extension}) then $(A^{\ast},u)$ is also a solution to \ref{eq:simul-infinite} with same optimal value.
\end{prop*}
\begin{proof}

First note that by Proposition \ref{prop:inner_equiv}, $\inn{\cdot,\cdot}_{A^{\ast}}$ is equivalent to $\inn{\cdot,\cdot}_{\mathcal{H}}$ therefore $\mathcal{H}$ equipped with $\inn{\cdot,\cdot}_{A^{\ast}}$ is a Hilbert space. Next, decompose $u^{\ast}$ into $u^{T}\in V$ and $u^{\perp}\in V^{\perp}$ where the orthogonal decomposition in with respect to $\inn{\cdot,\cdot}_{A^{\ast}}$ and write,
	\begin{align*}
		u^{\ast}=u^{T}+u^{\perp},
	\end{align*}
For $1\le i\le n$, we have,
\begin{align*}
&\ell(\Norm{z^{i}_{1}-u^{\ast}}^{2}_{A^{\ast}}-\Norm{z^{i}_{2}-u^{\ast}}^{2}_{A^{\ast}},y^{i})\\
=&\ell(\Norm{z^{i}_{1}-(u^{T}+u^{\perp})}^{2}_{A^{\ast}}-\Norm{z^{i}_{2}-(u^{T}+ u^{\perp})}^{2}_{A^{\ast}},y^{i})\\
=&\ell(\Norm{(z^{i}_{1}-u^{T})- u^{\perp}}^{2}_{A^{\ast}}-\Norm{(z^{i}_{2}-u^{T})- u^{\perp})}^{2}_{A^{\ast}}),y^{i})\\
=&\ell(\Norm{(z^{i}_{1}-u^{T})}^{2}_{A^{\ast}}+\Norm{u^{\perp}}^{2}_{A^{\ast}}-\Norm{(z^{i}_{2}-u^{T})}^{2}_{A^{\ast}}-\Norm{ u^{\perp}}^{2}_{A^{\ast}},y^{i})\\
=&\ell(\Norm{(z^{i}_{1}-u^{T})}^{2}_{A^{\ast}}-\Norm{(z^{i}_{2}-u^{T})}^{2}_{A^{\ast}},y^{i}).
\end{align*}
Above implies that,
\begin{align*}
	\frac{1}{n}\sum_{i=1}^{n}\ell(\Norm{z^{i}_{1}-u^{\ast}}^{2}_{A^{\ast}}-\Norm{z^{i}_{2}-u^{\ast}}^{2}_{A^{\ast}},y^{i})=\frac{1}{n}	\sum_{i=1}^{n}\ell(\Norm{z^{i}_{1}-u^{T}}^{2}_{A^{\ast}}-\Norm{z^{i}_{2}-u^{T}}^{2}_{A^{\ast}},y^{i}).
\end{align*}
Therefore if $(A^{\ast},u^{\ast})$ solves \ref{eq:simul-infinite} then $(A^{\ast},u^{T})$ also solves \ref{eq:simul-infinite}. Now by Proposition \ref{prop:inner_equiv}, it is clear that $(B^{\ast},u^{T})$ solves \ref{eq:simul-finite-hilbert}. The converse follows similarly. Let $(B^{\ast},v)$, $v\in V$ be a solution to \ref{eq:simul-finite-hilbert}. Let $A^{\ast}\in \mathcal{F}_{\mathcal{H}}$ be any Mahalanobis extension. We claim that $(A^{\ast},v)$ also solves $\ref{eq:simul-infinite}$. If not, there exist $(A_{1}^{\ast},u_{1})$ with smaller loss. Then arguing as above we know that the loss for $(A_{1}^{\ast},u_{1})$ would be same as the loss for $(A_{1}^{\ast},u^{T}_{1})$ which would be same as the loss for $(B_{1}^{\ast},u^{T}_{1})$. Therefore $(B_{1}^{\ast},u^{T}_{1})$ has smaller loss that $(B^{\ast},v)$ which is a contradiction. 
\end{proof}
\begin{thm*}[Theorem \ref{thm:reper} (restated)]
%\label{thm:reper-app}
Let $\lambda>0$ and consider the following infinite dimensional regularized problem,
\begin{align}
\label{eq:infinite-representer-app}
	\min_{A\in \mathcal{F}_{\mathcal{H}},u\in \mathcal{H}}\frac{1}{n}\sum_{i=1}^{n}\ell(\Norm{z^{i}_{1}-u}^{2}_{A}-\Norm{z^{i}_{2}-u}^{2}_{A},y^{i})+\lambda \Norm{u}^{2}_{A}, \tag{R}
\end{align}
and it's finite dimensional equivalent as follows,
\begin{align}
\label{eq:finite-representer-app}
	\min_{A\in \mathcal{F}_{V},u\in V}\frac{1}{n}\sum_{i=1}^{n}\ell(\Norm{z^{i}_{1}-u}^{2}_{A}-\Norm{z^{i}_{2}-u}^{2}_{A},y^{i})+\lambda \Norm{u}^{2}_{A}\tag{F},
\end{align}
Let $A^{\ast}\in \mathcal{F}_{H}$ be any Mahalanobis extension of $B^{\ast}\in \mathcal{F}_{V}$. Then $(A^{\ast},u)$ is a solution to  \ref{eq:infinite-representer-app} iff $(B^{\ast},u)$ is a solution to \ref{eq:finite-representer-app} with same optimal value. In particular this implies that $u\in V$. 
\end{thm*}

 \begin{proof}
 Let $(A^{\ast},u)$ be a solution to \ref{eq:infinite-representer-app}. By Proposition \ref{prop:induced-1}, we get a smaller loss for the regularized term when we project on $V$ (with respect to $\inn{\cdot,\cdot}_{A^{\ast}})$ while keeping the value of the other term unchanged, therefore $u\in V$. Next, we claim that $(B^{\ast},u)$ is a solution to \ref{eq:finite-representer-app}. If not, there exist a solution $(B_{1},v_{1})$ with smaller loss. Let $A_{1}$ be any Mahalanobis extension of $B_{1}$ obtained from  \ref{thm:mahal-char-rest}. It follows that $(A_{1},v_{1})$ has a same loss value for \ref{eq:infinite-representer-app} as $(B_{1},v_{1})$ for \ref{eq:finite-representer-app}. Therefore,  $(A_{1},v_{1})$ has smaller loss than $(A^{\ast},u)$ which is a contradiction. The converse follows with a similar argument and we are done. 
 \end{proof}

\begin{lem}
\label{lem:reper_euc-app}
Consider the same setting as in Theorem \ref{thm:gram-schmidt} and let $x\in \mathcal{H}$ and write  $x=x^{T}+x^{\perp}$ where the decomposition is with respect to $\inn{\cdot,\cdot}_{\mathcal{H}}$. Then $x^{T}$ can be represented by $\alpha_{x}=(\alpha_{1},...,\alpha_{m})^{\top}\in \R^{m}$ as follows, 
\begin{align*}
	x^{T}=\sum_{i=1}^{m}\inn{x,e_{i}}_{\mathcal{H}}e_{i}=\sum_{i=1}^{m}\alpha_{i}e_{i},
\end{align*}
and $\alpha_{i}$ is defined by,
\begin{align*}
	\alpha_{i}=\frac{1}{\sqrt{D_{i-1}D_{i}}}\begin{vmatrix}
\inn{x_{1},x_{1}}_{\mathcal{H}} & \dots & \dots & \inn{x_{1},x_{i}}_{\mathcal{H}} \\ 
\vdots & \ddots & \ddots & \vdots \\ 
\inn{x_{i-1},x_{1}}_{\mathcal{H}} & \dots & \dots & \inn{x_{i-1},x_{i}}_{\mathcal{H}} \\ 
\inn{x,x_{1}}_{\mathcal{H}} & \dots & \dots & \inn{x,x_{i}}_{\mathcal{H}} \notag
\end{vmatrix}.
\end{align*}
\end{lem}
\begin{proof}
This simply follows from {\it Leibniz formula for determinants} applying to $e_{i}$ obtaining from Theorem \ref{thm:gram-schmidt}. 
\end{proof}
\begin{lem*}[Lemma \ref{lem:gram-lemma} (restated)]
%\label{lem:gram-lemma}
Let \( (V,\inn{\cdot,\cdot}_{V}) \) be a real inner product space, and let \( \{v_1, \dots, v_m\} \subset V \) be a (possibly linearly dependent) set of vectors. Let \( G \in \mathbb{R}^{m \times m} \) denote the Gram matrix:
\[
G_{ij} := \langle v_i, v_j \rangle_V.
\]
Let \( A \in \mathbb{R}^{m \times m} \), and define the bilinear form \( \widetilde{A} \in V \otimes V \) as:
\[
\widetilde{A} := \sum_{i,j=1}^m a_{ij} \, v_i \otimes v_j.
\]
Then for any \( u=(u_1,...,u_m)^{\top} \in \mathbb{R}^m \), the vector \( \bar{u} := \sum_{k=1}^m u_k v_k \in \operatorname{span}\{v_i\} \) satisfies:
\[
\langle \widetilde{A}, \bar{u} \otimes \bar{u} \rangle = u^\top G A G u.
\]

\end{lem*}
\begin{proof}
We begin by expanding \( \bar{u} \otimes \bar{u} \):
\[
\bar{u} \otimes \bar{u} = \left( \sum_{l=1}^m u_l v_l \right) \otimes \left( \sum_{t=1}^m u_t v_t \right) = \sum_{l,t=1}^m u_l u_t \, v_l \otimes v_t.
\]

Similarly, \( \widetilde{A} = \sum_{i,j=1}^m a_{ij} \, v_i \otimes v_j \). Then using bilinearity of the inner product on \( V \otimes V \), we compute:
\[
\langle \widetilde{A}, \bar{u} \otimes \bar{u} \rangle
= \left\langle \sum_{i,j} a_{ij} v_i \otimes v_j, \sum_{l,t} u_l u_t v_l \otimes v_t \right\rangle
= \sum_{i,j,l,t} a_{ij} u_l u_t \langle v_i \otimes v_j, v_l \otimes v_t \rangle.
\]

By the definition of the inner product on the tensor space, we have:
\[
\langle v_i \otimes v_j, v_l \otimes v_t \rangle = \langle v_i, v_l \rangle \cdot \langle v_j, v_t \rangle = G_{il} G_{jt}.
\]

Substituting back, we obtain:
\[
\langle \widetilde{A}, \bar{u} \otimes \bar{u} \rangle
= \sum_{i,j,l,t} a_{ij} u_l u_t G_{i l} G_{j t}.
\]

Now define the vector \( Gu \in \mathbb{R}^m \) with components \( (Gu)_i = \sum_l G_{i l} u_l \), and observe that:
\[
\sum_{i,j} (Gu)_i \, a_{ij} \, (Gu)_j = (Gu)^\top A (Gu) = u^\top G A G u.
\]
\end{proof}
\begin{lem}
\label{lem:ImG-ImL}
Let $M = L L^{\top}$ with $L \in \mathbb{R}^{m\times r}$. Then
\[
\ker(M) \;=\; \ker(L^{\top}) \qquad\text{and}\qquad \operatorname{Im}(M) \;=\; \operatorname{Im}(L).
\]
\end{lem}

\begin{proof}
First, $\ker(M)=\ker(L^\top)$:
\[
Mx = 0 \;\Longleftrightarrow\; x^{\top}Mx = 0 \;\Longleftrightarrow\; \|L^{\top}x\|^2 = 0 \;\Longleftrightarrow\; L^{\top}x=0.
\]
Next, since $M$ is symmetric, the range–kernel orthogonality gives,
\[
\operatorname{Im}(M) \;=\; \ker(M)^{\perp} \;=\; \ker(L^{\top})^{\perp} \;=\; \operatorname{Im}(L),
\]
as claimed.
\end{proof}
\begin{lem}
\label{lem:proj-gram}
Let $G\in\mathbb{R}^{m\times m}$ be a Gram matrix.
Define $P_G \;:=\; GG^\dagger$. Then
\[
P_G:= GG^\dagger \;=\; G^\dagger G \;=\; (G^\dagger G)^\top,
\]
and $P_G$ is the orthogonal projector onto $\operatorname{Im}(G)$.
\end{lem}

\begin{proof}
Since $G$ is symmetric PSD, it admits an eigen-decomposition
$G = U \Lambda U^\top$, where $U=[u_1,..,u_m]$ is orthogonal and
$\Lambda=\mathrm{diag}(\lambda_1,\dots,\lambda_m)$ with $\lambda_i\ge 0$.
The pseudoinverse is $G^\dagger = U \Lambda^\dagger U^\top$, where
$\Lambda^\dagger=\mathrm{diag}(\lambda_1^\dagger,\dots,\lambda_m^\dagger)$ and
$\lambda_i^\dagger = 1/\lambda_i$ if $\lambda_i>0$, and $0$ otherwise. We have,
\[
GG^\dagger
= U(\Lambda \Lambda^\dagger)U^\top
= U\,\mathrm{diag}(\mathbf{1}_{\{\lambda_i>0\}})\,U^\top=G^\dagger G.
\]
Thus $GG^\dagger = G^\dagger G$ and $P_G=GG^\dagger$ is symmetric and $P_G^2=P_G$. Therefore, $P_G$
is orthogonal projector onto its image. Finally, we have, $\operatorname{Im}(P_G) = \mathrm{span}\{u_i:\lambda_i>0\} = \operatorname{Im}(G)$.

\end{proof}

\begin{lem*}[Lemma \ref{lem:quad-param} (restated)]
Let \( G \in \mathbb{R}^{m \times m} \) be a symmetric positive semidefinite matrix with Moore-Penrose pseudoinverse \( G^\dagger \). Then:

\begin{itemize}
    \item[(i)] For any matrix \( L \in \mathbb{R}^{m \times r} \), define
    \[
    A := G^\dagger L L^\top G^\dagger.
    \]
    Then the matrix \( G A G \in \mathbb{R}^{m \times m} \) is symmetric and positive semidefinite:
    \[
    G A G = P_G L L^\top P_G \succeq 0,
    \]
    where \( P_G := G G^\dagger \) is the orthogonal projector onto \( \operatorname{Im}(G) \).

    \item[(ii)] Conversely, if \( A \in \mathbb{R}^{m \times m} \) is symmetric and \( G A G \succeq 0 \), then there exists a matrix \( L \in \mathbb{R}^{m \times r} \) such that
    \[
    A = G^\dagger L L^\top G^\dagger.
    \]
\end{itemize}
\end{lem*}

\begin{proof}
(i) Let \( A = G^\dagger L L^\top G^\dagger \). Then:
\[
G A G = G G^\dagger L L^\top G^\dagger G = P_G L L^\top P_G.
\]
Since \( P_G \) is an orthogonal projector (See Lemma \ref{lem:proj-gram}) and \( L L^\top \succeq 0 \), we conclude that \( G A G \succeq 0 \).

(ii) Suppose \( A \) is symmetric and \( G A G \succeq 0 \). Then \( G A G \) admits a factorization:
\[
G A G = \tilde{L} \tilde{L}^\top.
\]
Since \( \operatorname{Im}(\tilde{L})=\operatorname{Im}(G A G) \subseteq \operatorname{Im}(G) =  \) (See Lemma \ref{lem:ImG-ImL}), we can write \( \tilde{L} = P_G L \) for some matrix \( L \), and define:
\[
A := G^\dagger L L^\top G^\dagger.
\]
Then:
\[
G A G = G G^\dagger L L^\top G^\dagger G = P_G L L^\top P_G = \tilde{L} \tilde{L}^\top,
\]
as desired.
\end{proof}
\section{Background}
\label{sec:back}
Here we review some math background for completeness. We refer to standard references for a more detailed discussion.
\subsection{Leibniz Formula}
\begin{thm}
\label{thm:lebniz}
Let $A$ be a $n \times n$ square matrix. Then we have the following for its determinant,
\begin{align*}
    \det(A)=\sum_{\tau \in S_{n}}\sgn(\tau)\prod_{i=1}^{n}a_{i\tau(i)}
\end{align*},
where $a_{ij}$ denotes the $i$ and $j$ entries and $S_{n}$ denotes the symmetric group on $n$ letters and for $\tau \in S_{n}$, $\sgn(\tau)$ denotes the sign of the permutation.
\end{thm}
\begin{rmk}
When computing the formal determinant, as exemplified in Theorem \ref{thm:gram-schmidt}, we calculate a formal determinant where the last row consists of vectors and the other rows consist of scalars. In this scenario, the determinant can be interpreted using the Leibniz formula mentioned earlier, where for $\tau \in S_{n}$, $\sgn(\tau)\prod_{i=1}^{n}a_{i\tau(i)}$ represents the product of $n-1$ scalars and one vector. Consequently, the resultant outcome can be viewed as a vector.
\end{rmk}
\subsection{Tensor Product and Musical Isomorphism}

Here, we review some properties of the tensor product and the dual of a vector space, as utilized in the statement of Proposition \ref{prop:finite-euc}. These properties are well-documented in standard references; see, for example, \cite{lang2012algebra}. Let $V$ and $W$ be two vector spaces. Their tensor product $V \otimes W$ is a vector space consisting of all formal sums:

\[
V \otimes W = \left\{\sum_{i=1}^{n} v_{i} \otimes w_{i} \mid v_{i} \in V, w_{i} \in W, n \in \mathbb{N}\right\},
\]

such that linearity is preserved in each coordinate. Specifically, for $a \in \mathbb{R}$:

\[
a \cdot \sum_{i=1}^{n} v_{i} \otimes w_{i} = \sum_{i=1}^{n} (a \cdot v_{i}) \otimes w_{i} = \sum_{i=1}^{n} v_{i} \otimes (a \cdot w_{i}).
\]

 When $V$ is $n$-dimensional with a basis $\{v_{1}, \ldots, v_{n}\}$, and $W$ is $m$-dimensional with a basis $\{w_{1}, \ldots, w_{m}\}$, it can be shown that $V \otimes W$ is $nm$-dimensional, with a basis given by $\{v_{i} \otimes w_{j} \mid 1 \leq i \leq n, 1 \leq j \leq m\}$. 
Next, let \( V \) be a finite-dimensional vector space with an inner product \( \langle \cdot, \cdot \rangle_{g} \). The dual space \( V^{\ast} \) consists of all linear functionals on \( V \). The space of linear maps, such as \( T: V \to V \), can be represented as \( V \otimes V^{\ast} \). Moreover, \( V \) can be naturally identified with \( V^{\ast} \) via the so-called \textit{musical isomorphism}, defined as:

\[
\flat: v \in V \mapsto \langle v, \cdot \rangle_{g} \in V^{\ast},
\]
\[
\sharp: \langle v, \cdot \rangle_{g} \in V^{\ast} \mapsto v \in V.
\]

Therefore, the space of linear maps on \( V \), identified with \( V \otimes V^{\ast} \), can further be identified with \( V \otimes V \) using the musical isomorphism. This identification is utilized in the statement of Proposition \ref{prop:finite-euc}. Finally, given two inner product spaces \( (V, \langle \cdot, \cdot \rangle_{V}) \) and \( (W, \langle \cdot, \cdot \rangle_{W}) \), the tensor product \( V \otimes W \) inherits a natural inner product defined on pure tensors by
\[
\langle v_{1} \otimes w_{1},\, v_{2} \otimes w_{2} \rangle_{V \otimes W}
\;=\; \langle v_{1}, v_{2} \rangle_{V} \, \langle w_{1}, w_{2} \rangle_{W},
\]
and extended bilinearly to all of \( V \otimes W \). This construction ensures that \( V \otimes W \) is itself an inner product space, compatible with the identifications discussed above. This is used in Lemma \ref{lem:gram-lemma}.

%%%%%%%%%%%%%%%%%%%%%%%%%%%%%%%%%%%%%%%%%
 \subsection{Gram-Schmidt Process}
 \label{sec:gram-schmidt}
 %%%%%%%%%%%%%%%%%%%%%%%%%%%%%%%%%%%%%%%%%%
In this section, we recall the determinant form of the Gram-Schmidt process. We refer to \cite{gantmacher1959theory} for detailed discussion. Let $\mathcal{H}$, equipped with $\inn{\cdot,\cdot}_{\mathcal{H}}$ be an inner product space. 
\begin{thm}[Gram-Schmidt Determinant Formula]
\label{thm:gram-schmidt}
Let $\{x_{1},...,x_{m}\} \in \mathcal{H}$ be linearly independent and set,
\begin{align*}
	V=\spn\{x_{1},...,x_{m}\} \subset \mathcal{H}.
\end{align*}
Then $\{e_i\}_{i=1}^{m}$ forms an orthonormal basis for $V$ where for $1\le i \le m$, $e_{i}$ is defined by,
\begin{align*}
	e_{i}=\frac{1}{\sqrt{D_{i-1}D_{i}}}\begin{vmatrix}
\inn{x_{1},x_{1}}_{\mathcal{H}} & \dots & \dots & \inn{x_{1},x_{i}}_{\mathcal{H}} \\ 
\vdots & \ddots & \ddots & \vdots \\ 
\inn{x_{i-1},x_{1}}_{\mathcal{H}} & \dots & \dots & \inn{x_{i-1},x_{i}}_{\mathcal{H}} \\ 
x_{1} & \dots & \dots & x_{i}  \notag
\end{vmatrix},
\end{align*}
where for each $1\le i\le m$, we compute a {\it formal $i\times i$ determinant} and $D_{0}=1$ and for $1\le i\le m$, $D_{i}$ is defined as follows,
\begin{align*}
	D_{i}=\begin{vmatrix}
\inn{x_{1},x_{1}}_{\mathcal{H}} & \dots & \dots & \inn{x_{1},x_{i}}_{\mathcal{H}} \\ 
\vdots & \ddots & \ddots & \vdots \\ 
\inn{x_{i-1},x_{1}}_{\mathcal{H}} & \dots & \dots & \inn{x_{i-1},x_{i}}_{\mathcal{H}} \\ 
\inn{x_{i},x_{1}}_{\mathcal{H}} & \dots & \dots & \inn{x_{i},x_{i}}_{\mathcal{H}}  \notag
\end{vmatrix}.
\end{align*}
\end{thm}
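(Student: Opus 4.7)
The plan is to prove that $\{e_i\}_{i=1}^{m}$ is orthonormal directly from the determinant formula, then deduce it is a basis for $V$ from a dimension count. Throughout, the central device is the cofactor expansion of the formal determinant along its last (vector-valued) row, which turns $e_i$ into an explicit linear combination of $x_1,\dots,x_i$. Because this cofactor expansion is $\inn{\cdot,\cdot}_{\mathcal{H}}$-linear in each entry of the last row, taking the inner product of $e_i$ with any $x_j$ amounts to replacing the vector row $x_1,\dots,x_i$ by the scalar row $\inn{x_j,x_1}_{\mathcal{H}},\dots,\inn{x_j,x_i}_{\mathcal{H}}$; this is the observation that drives the whole argument.

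First I would note that since $\{x_1,\dots,x_m\}$ is linearly independent, the Gram determinants $D_i$ are strictly positive (a standard consequence of positive-definiteness of Gram matrices), so the square roots $\sqrt{D_{i-1}D_i}$ make sense and $e_i$ is a well-defined element of $V$, contained in $\spn\{x_1,\dots,x_i\}$. Expanding along the last row, the coefficient of $x_i$ in $e_i$ is $D_{i-1}/\sqrt{D_{i-1}D_i}$, which I will use later for the normalization step.

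Next I would establish orthogonality. Fix $j<i$ and compute
\begin{align*}
\inn{e_i,x_j}_{\mathcal{H}} = \frac{1}{\sqrt{D_{i-1}D_i}}\begin{vmatrix}
\inn{x_{1},x_{1}}_{\mathcal{H}} & \dots & \inn{x_{1},x_{i}}_{\mathcal{H}} \\
\vdots & \ddots & \vdots \\
\inn{x_{i-1},x_{1}}_{\mathcal{H}} & \dots & \inn{x_{i-1},x_{i}}_{\mathcal{H}} \\
\inn{x_{j},x_{1}}_{\mathcal{H}} & \dots & \inn{x_{j},x_{i}}_{\mathcal{H}}
\end{vmatrix}.
\end{align*}
The determinant vanishes since row $j$ equals the last row, so $\inn{e_i,x_j}_{\mathcal{H}}=0$ whenever $j<i$. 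Because each $e_k$ with $k<i$ lies in $\spn\{x_1,\dots,x_{k}\}\subset \spn\{x_1,\dots,x_{i-1}\}$, linearity yields $\inn{e_i,e_k}_{\mathcal{H}}=0$ for all $k<i$. The same identity with $j=i$ gives $\inn{e_i,x_i}_{\mathcal{H}} = D_i/\sqrt{D_{i-1}D_i}$.

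For normalization, combine the previous two facts: since $e_i = \tfrac{D_{i-1}}{\sqrt{D_{i-1}D_i}} x_i + w$ with $w\in \spn\{x_1,\dots,x_{i-1}\}$, and since $e_i$ is orthogonal to that span,
\begin{align*}
\inn{e_i,e_i}_{\mathcal{H}} = \frac{D_{i-1}}{\sqrt{D_{i-1}D_i}}\inn{e_i,x_i}_{\mathcal{H}} = \frac{D_{i-1}}{\sqrt{D_{i-1}D_i}}\cdot\frac{D_i}{\sqrt{D_{i-1}D_i}} = 1.
\end{align*}
Thus $\{e_1,\dots,e_m\}$ is an orthonormal set in $V$, and since $\dim V=m$ it is an orthonormal basis. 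The only potentially tricky step is the cofactor bookkeeping that underlies the two identities $\inn{e_i,x_j}_{\mathcal{H}}=0$ for $j<i$ and $\inn{e_i,x_i}_{\mathcal{H}}=D_i/\sqrt{D_{i-1}D_i}$; once the convention for expanding a formal determinant with a vector row along that row is fixed, everything else is automatic from multilinearity and alternating properties of the determinant.
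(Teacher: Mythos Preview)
Your argument is correct and is essentially the classical proof: expand the formal determinant along its vector row, use multilinearity to turn $\inn{e_i,x_j}_{\mathcal{H}}$ into an ordinary determinant with a repeated row for $j<i$, and read off $\inn{e_i,x_i}_{\mathcal{H}}=D_i/\sqrt{D_{i-1}D_i}$ and the $x_i$-coefficient $D_{i-1}/\sqrt{D_{i-1}D_i}$ to get $\Norm{e_i}_{\mathcal{H}}=1$.

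As for the comparison: the paper does not supply its own proof of this statement. It presents the theorem as a classical fact and refers the reader to \cite{gantmacher1959theory} for details. Your write-up is the standard argument one finds in that reference, so there is no methodological divergence to discuss; you have simply filled in what the paper chose to cite rather than reprove.
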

\begin{rmk}
When computing the formal determinant, as exemplified in Theorem \ref{thm:gram-schmidt}, we calculate a formal determinant where the last row consists of vectors and the other rows consist of scalars. In this scenario, the determinant can be interpreted using the Leibniz formula mentioned earlier, where for $\tau \in S_{n}$, $\sgn(\tau)\prod_{i=1}^{n}a_{i\tau(i)}$ represents the product of $n-1$ scalars and one vector. Consequently, the resultant outcome can be viewed as a vector.
\end{rmk}
%%%%%%%%%%%%%%%%%%%%%%%%%%%%%%%%%%%%%%%%%%%%%%%%%%%%%%%%%%%%
\subsection{Linear Operators on Hilbert space}
Here, we review some standard facts about bounded operators on Hilbert spaces. For a more detailed discussion, see \cite{folland1999real}.
Throughout this subsection, we assume $\mathcal{H}$ is a (real) Hilbert space equipped with inner product $\inn{\cdot,\cdot}_{\mathcal{H}}$ and associated norm $\Norm{\cdot}_{\mathcal{H}}$.
\begin{deff}
A linear operator,
\begin{align*}
    A:\mathcal{H}\to \mathcal{H},
\end{align*}
is called bounded if one of the following equivalent conditions holds,
\begin{itemize}
    \item $A$ is continuous at $0\in \mathcal{H}$,
    \item $A$ is continuous,
    \item There exist $c>0$ such that $\Norm{Ax}_{\mathcal{H}}\le c$ for all $x\in \mathcal{H}$ with $\Norm{x}_{\mathcal{H}}\le 1$,
    \item There exist $c>0$ such that $\Norm{Ax}_{\mathcal{H}} \le c \Norm{x}_{\mathcal{H}}$ for all $x\in \mathcal{H}$,
\end{itemize}
\end{deff}
\begin{rmk}
When $\mathcal{H}$ is $n$-dimensional (e.g., $\mathbb{R}^{n}$ equipped with the standard inner product), the linear map $A$ can be represented by an $n \times n$ matrix $M$ which depends on the basis chosen for $\mathcal{H}$. Therefore we can think of bounded linear operators as natural generalizations of matrices.
\end{rmk}
\begin{thm}[Riesz Representation Theorem]
    Let $\phi:\mathcal{H}\to \R$ be a bounded functional. Then there exist a unique $z_{\phi}\in \mathcal{H}$, denoted by Riesz representation of $\phi$, such that for all $x\in \mathcal{H}$,
    \begin{align*}
        \phi(x)=\inn{x,z_{\phi}}_{\mathcal{H}}.
    \end{align*}
\end{thm}
\begin{deff}
Let 
\begin{align*}
    A:\mathcal{H}\to \mathcal{H},
\end{align*}
be a bounded linear operator. The adjoint of $A$, denoted by $A^{\ast}$ is bounded linear operator on $\mathcal{H}$ that for any $x,y \in\mathcal{H}$ satisfies,
\begin{align*}
    \inn{Ax,y}=\inn{x,A^{\ast}y}.
\end{align*}
An operator $A$ is called self-adjoint if $A=A^{\ast}$
\end{deff}
\begin{rmk}
The existence and uniqueness of the adjoint follow from the Riesz representation theorem. In the case where $\mathcal{H}$ is $n$-dimensional (e.g., $\mathbb{R}^{n}$ equipped with the standard inner product), and the linear map $A$ is represented by an $n \times n$ matrix $M$, then the adjoint of $A$ corresponds to its transpose, denoted as $M^{\top}$. Therefore, self-adjoint operators on Hilbert spaces can be considered as a generalization of symmetric matrices.
\end{rmk}
\begin{deff}
A bounded linear operator on $\mathcal{H}$ is called strictly positive (or bounded from below) if there exist $c>0$ such that for any $x\in \mathcal{H}$,
\begin{align*}
    \inn{x,Ax}_{\mathcal{H}}>c\cdot \Norm{x}^{2}_{\mathcal{H}}.
\end{align*}
\end{deff}
\begin{rmk}
In the case where $\mathcal{H}$ is $n$-dimensional (e.g., $\mathbb{R}^{n}$ equipped with the standard inner product), and the linear map $A$ is represented by an $n \times n$ matrix $M$, the positivity of $A$ translates to $M$ being a positive definite matrix. Consequently, positive operators on Hilbert spaces can be regarded as a generalization of positive definite matrices.
\end{rmk}
We next recall the notion norm equivalence of vector spaces,
\begin{deff}
    Let $g_{1},g_{2}$ be two inner product on a Hilbert space $\mathcal{H}$. We say $g_{1}$ and $g_{2}$ are equivalent if there exist constant $c_{1}$ and $c_{2}$ such that for all $x\in \mathcal{H}$,
\begin{align*}
    c_{1}g_{1}(x,x)\le g_{2}(x,x)\le c_{2}g_{1}(x,x).
\end{align*}
    \end{deff}
%%%%%%%%%%%%%%%%%%
We next recall the definition of space of generalized Mahalanobis inner product from main body,
\begin{deff}[Space of generalized Mahalanobis inner products]
Space of generalized Mahalanobis inner product on a Hilbert space $\mathcal{H}$ is defined the following set,
\begin{align*}
	\mathcal{F}_{\mathcal{H}}:=\{A:\mathcal{H}\to \mathcal{H}|A \text{ is bounded, strictly positive, and self-adjoint\}}.
\end{align*}
\end{deff}
%%%%%%%%%%%%%%%%%

In light of the above discussion, we can regard elements of $\mathcal{F}_{\mathcal{H}}$ as a generalized version of symmetric, positive-definite matrices in finite-dimensional space.

\subsection{Kernels and RKHS}
In this subsection, we review some standard concepts related to kernels and reproducing kernel Hilbert spaces that is used in the main body of the paper. These can be found in standard references such as \cite{smola1998learning,song2009hilbert}. Throughout this discussion, let $\mathcal{X}$ denote the feature space.
\begin{deff}
A (real) kernel on $\mathcal{X}$ is a mapping $k:\mathcal{X}\times \mathcal{X}\to \R$, such that, it is symmetric, i.e. for any $x,y\in \mathcal{X}$, $k(x,y)=k(y,x)$, and for any $\{x_{1},...,x_{n}\}\subset \mathcal{X}$ the corresponding $n\times n$ Gram matrix $K$ defined by, $K_{ij}=k(x_{i},x_{j})$, is positive definite.
\end{deff}

Given a kernel function $k$ is given one can construct a special Hilbert space associated to it called universal RKHS. First, consider the following vector space,$\mathcal{H}:=\{f|f:\mathcal{X}\to \R\}$, and consider the feature map,
\begin{align*}
&\Phi:\mathcal{X}\to \mathcal{H}\\
&\Phi(x):=k(x,.).	
\end{align*}
Next, let $\mathcal{H}_{k}$ denotes the vector space generated by $\Phi(\mathcal{X})$. More formally,  $\mathcal{H}_{k}:=\{\sum_{i=1}^{m}a_{i}k(.,x_{i})|a_{i},x_{i}\in\R, m\in \N \}\subset \mathcal{H}$. One can define the inner product on $\mathcal{H}_{k}$ first by, $\inn{k(x,.),k(y,.)}:=k(x,y)$,
and extend above to $\mathcal{H}_{k}$ by linearity. Notice that above implies that for $f\in \mathcal{H}_{k}$, $f(x)=\inn{f,k(x,.)}$,
which is referred to as reproducing property. The completion of $\mathcal{H}_{k}$ (also denoted by $\mathcal{H}_{k}$) is called the \emph{reproducing kernel Hilbert space} (RKHS) associated with the kernel $k$. It can be shown that each kernel function uniquely determines an RKHS, whose elements lie in $\mathcal{H}$, as stated in the following theorem.
\begin{thm}[\cite{aronszajn1950theory}]
    Let $k$ be a symmetric positive definite kernel. Then there exist a unique Hilbert space of functions for which $k$ is reproducing kernel. 
\end{thm}
\subsection{Moore–Penrose Pseudoinverse}
In this subsection, we recall the definition and basic properties of the Moore–Penrose pseudoinverse \cite{penrose1955generalized}. For a detailed discussion, we refer the reader to \cite{ben2003generalized}. This concept is used in Subsection \ref{subsec:Gram} to eliminate the need for the Gram–Schmidt process.
\begin{deff}[\cite{penrose1955generalized}]
Let \( M \in \mathbb{R}^{m \times n} \) be a real matrix. The \textit{Moore--Penrose pseudoinverse} of \( M \), denoted \( M^\dagger \in \mathbb{R}^{n \times m} \), is the unique matrix satisfying the following four properties:
\begin{align*}
(1) \quad & M M^\dagger M = M \\
(2) \quad & M^\dagger M M^\dagger = M^\dagger \\
(3) \quad & (M M^\dagger)^\top = M M^\dagger \\
(4) \quad & (M^\dagger M)^\top = M^\dagger M
\end{align*}

If \( M \) is full-rank and square, then \( M^\dagger = M^{-1} \). If \( M \) is symmetric and positive semidefinite (as in Gram matrices), then the pseudoinverse can be computed via eigen-decomposition:
\[
M = U \Lambda U^\top \quad \Rightarrow \quad M^\dagger = U \Lambda^\dagger U^\top,
\]
where \( \Lambda^\dagger \) is obtained by taking reciprocal of nonzero eigenvalues and leaving zeros unchanged.
\end{deff}
\section*{Acknowledgments} 
% \addcontentsline{toc}{section}{Acknowledgments}
I would like to thank the anonymous reviewers for their many helpful suggestions, including comments related to  \cite{chau2022spectral} and its associated baselines and datasets, which provided a valuable setting for evaluating the proposed algorithms.
I also thank Rob Nowak and Ramya Vinayak for helpful discussions and for bringing this general line of work (\cite{chatpatanasiri2010new}, \cite{tatli2025metric}) to my attention.
I further thank Remzi Arpaci-Dusseau for helpful conversations.
Experiments reported in this work were partly supported by the Google Cloud Research Credits program under award GCP19980904.

\bibliography{Peyman_note}
\addcontentsline{toc}{section}{Acknowledgments}
\addcontentsline{toc}{section}{References}
 % Add References to ToC
\end{document}